\documentclass{article}

\pdfoutput=1

\PassOptionsToPackage{numbers, compress}{natbib}



\usepackage[final]{neurips_2021}


\usepackage[utf8]{inputenc} 
\usepackage[T1]{fontenc}    
\usepackage[hidelinks]{hyperref}       
\usepackage{url}            
\usepackage{booktabs}       
\usepackage{amsfonts}       
\usepackage{nicefrac}       
\usepackage{microtype}      
\usepackage{xcolor}         
\usepackage{custom}
\usepackage{wrapfig}
\usepackage{varwidth}

\usepackage{pgfplots} 
\usetikzlibrary{positioning,decorations.pathreplacing,decorations.markings,calc,arrows,pgfplots.groupplots}

\makeatletter
\newcommand{\printfnsymbol}[1]{%
  \textsuperscript{\@fnsymbol{#1}}%
}

\title{Uncertainty-Based Offline Reinforcement Learning with Diversified Q-Ensemble}
%

\author{%
  Gaon An\thanks{First two authors have equal contributions}~~$^{1 \, 2}$, ~ Seungyong Moon\printfnsymbol{1}$^{1 \, 2}$, ~ Jang-Hyun Kim$^{1 \, 2}$, ~ Hyun Oh Song\thanks{Corresponding author}~~$^{1 \, 2 \, 3}$ \\
  Seoul National University$^1$ \\
  Neural Processing Research Center$^2$ \\
  DeepMetrics$^3$ \\
  \texttt{\{white0234,symoon11,janghyun,hyunoh\}@mllab.snu.ac.kr} \\
}

\begin{document}

\maketitle

\begin{abstract}
Offline reinforcement learning (offline RL), which aims to find an optimal policy from a previously collected static dataset, bears algorithmic difficulties due to function approximation errors from out-of-distribution (OOD) data points. To this end, offline RL algorithms adopt either a constraint or a penalty term that explicitly guides the policy to stay close to the given dataset. However, prior methods typically require accurate estimation of the behavior policy or sampling from OOD data points, which themselves can be a non-trivial problem. Moreover, these methods under-utilize the generalization ability of deep neural networks and often fall into suboptimal solutions too close to the given dataset. In this work, we propose an uncertainty-based offline RL method that takes into account the confidence of the Q-value prediction and does not require any estimation or sampling of the data distribution. We show that the clipped Q-learning, a technique widely used in online RL, can be leveraged to successfully penalize OOD data points with high prediction uncertainties. Surprisingly, we find that it is possible to substantially outperform existing offline RL methods on various tasks by simply increasing the number of Q-networks along with the clipped Q-learning. Based on this observation, we propose an ensemble-diversified actor-critic algorithm that reduces the number of required ensemble networks down to a tenth compared to the naive ensemble while achieving state-of-the-art performance on most of the D4RL benchmarks considered.
\end{abstract}

\section{Introduction}
\label{sec:intro}

Over the recent years, deep reinforcement learning (deep RL) has achieved considerable success in various domains such as robotics \cite{mnih2015human}, recommendation systems \cite{chen2019recommendation}, and strategy games \cite{vinyals2019alphastar}. However, a major drawback of RL algorithms is that they adopt an active learning procedure, where training steps require active interactions with the environment. This trial-and-error procedure can be prohibitive when scaling RL to real-world applications such as autonomous driving and healthcare, as exploratory actions can cause critical damage to the agent or the environment \cite{levine2020offline}. \textit{Offline} RL, also known as \textit{batch} RL, aims to overcome this problem by learning policies using only previously collected data without further interactions with the environment \cite{agarwal2020optimistic, fujimoto2019off, levine2020offline}.

Even though offline RL is a promising direction to lead a more \textit{data-driven} way of solving RL problems, recent works show offline RL faces new algorithmic challenges \cite{levine2020offline}. Typically, if the coverage of the dataset is not sufficient, vanilla RL algorithms suffer severely from extrapolation error, overestimating the Q-values of out-of-distribution (OOD) state-action pairs \cite{kumar2019stabilizing}. To this end, most offline RL methods apply some constraints or penalty terms on top of the existing RL algorithms to enforce the learning process to be more conservative. For example, some prior works explicitly regularize the policy to be close to the behavior policy that was used to collect the data \cite{fujimoto2019off, kumar2019stabilizing}. A more recent work instead penalizes the Q-values of OOD state-action pairs to enforce the Q-values to be more pessimistic \cite{kumar2020conservative}. 

While these methods achieve significant performance gains over vanilla RL methods, they either require an estimation of the behavior policy or explicit sampling from OOD data points, which themselves can be non-trivial to solve. Furthermore, these methods do not utilize the generalization ability of the Q-function networks and prohibit the agent from approaching any OOD state-actions without any consideration on whether they are good or bad. However, if we can identify OOD data points where we can predict their Q-values with high confidence, it is more effective not to restrain the agent from choosing those data points. 

From this intuition, we propose an uncertainty-based model-free offline RL method that effectively quantifies the uncertainty of the Q-value estimates by an ensemble of Q-function networks and does not require any estimation or sampling of the data distribution. To achieve this, we first show that a well-known technique from online RL, the clipped Q-learning \cite{fujimoto2018addressing}, can be successfully leveraged as an uncertainty-based penalization term. Our experiments reveal that we can achieve state-of-the-art performance on various offline RL tasks by solely using this technique with increased ensemble size. To further improve the practical usability of the method, we develop an ensemble diversifying objective that significantly reduces the number of required ensemble networks. We evaluate our proposed method on D4RL benchmarks \cite{fu2020d4rl} and verify that the proposed method outperforms the previous state-of-the-art by a large margin on various types of environments and datasets.

\section{Preliminaries}
\label{sec:prelim}

We consider an environment formulated as a Markov Decision Process (MDP) defined by a tuple $(\Scal, \Acal, T, r, d_0, \gamma)$, where $\Scal$ is the state space, $\Acal$ is the action space, $T(\bfs' \mid \bfs, \bfa)$ is the transition probability distribution, $r : \Scal \times \Acal \xrightarrow{} \mathbb{R}$ is the reward function, $d_0$ is the initial state distribution, and $\gamma \in (0, 1]$ is the discount factor. The goal of reinforcement learning is to find an optimal policy $\pi(\bfa \mid \bfs)$ that maximizes the cumulative discounted reward $\mathbb{E}_{\bfs_t, \bfa_t} \left[ \sum_{t=0}^\infty \gamma^t r(\bfs_t, \bfa_t) \right]$, where $\bfs_0 \sim d_0(\cdot)$, $\bfa_t \sim \pi(\cdot \mid \bfs_t)$, and $\bfs_{t+1} \sim T(\cdot \mid \bfs_t, \bfa_t)$.

One of the major approaches for obtaining such a policy is Q-learning \cite{haarnoja2018soft, mnih2015human} which learns a state-action value function $Q_\phi(\bfs, \bfa)$ parameterized by a neural network that represents the expected cumulative discounted reward when starting from state $\bfs$ and action $\bfa$. Standard actor-critic approach \cite{konda2000actor} learns this  Q-function by minimizing the Bellman residual $\left(Q_\phi(\bfs, \bfa)- \mathcal{B}^{\pi_\theta} Q_\phi(\mathbf{s}, \mathbf{a})\right)^2$, where $\mathcal{B}^{\pi_\theta} Q_\phi(\mathbf{s}, \mathbf{a})= \mathbb{E}_{\bfs' \sim T(\cdot \mid \bfs, \bfa)} \left[r(\mathbf{s}, \mathbf{a}) + \gamma\ \mathbb{E}_{\mathbf{\mathbf{a'} \sim \pi_\theta (\cdot \mid \mathbf{s'})}} Q_{\phi}\left(\mathbf{s'}, \mathbf{a'} \right)\right]$ is the Bellman operator. 
In the context of offline RL, where transitions are sampled from a static dataset $\mathcal{D}$, the objective for the Q-network becomes minimizing
\begin{align}
    \label{eqn:sac_q_pre}
	J_q(Q_{\phi}) := \ & \mathbb{E}_{(\mathbf{s},\mathbf{a},\mathbf{s'}) \sim \mathcal{D}} \left[ \Big(Q_{\phi}(\mathbf{s},\mathbf{a}) -  \left( r(\mathbf{s}, \mathbf{a}) + \gamma\ \mathbb{E}_{\mathbf{\mathbf{a'} \sim \pi_\theta (\cdot \mid \mathbf{s'})}} \left[ Q_{\phi'}\left(\mathbf{s'}, \mathbf{a'} \right) \right] \right) \Big)^2 \right],
\end{align}
where $Q_{\phi'}$ represents the target Q-network softly updated for algorithmic stability \cite{mnih2015human}. The policy, which is also parameterized by a neural network, is updated in an alternating fashion to maximize the expected Q-value:
$
	J_p(\pi_\theta) := \ \mathbb{E}_{\mathbf{s} \sim \mathcal{D}, \mathbf{a} \sim \pi_\theta (\cdot \mid \mathbf{s})} \left[ Q_{\phi} (\mathbf{s}, \mathbf{a}) \right].
$

However, as the policy is updated to maximize the Q-values, the actions $\bfa'$ sampled from the current policy in \Cref{eqn:sac_q_pre} can be biased towards OOD actions with erroneously high Q-values. In the offline RL setting, such errors cannot be corrected by feedback from the environment as in online RL. To handle the error propagation from these OOD actions, most offline RL algorithms regularize either the policy \cite{fujimoto2019off, kumar2019stabilizing} or the Q-function \cite{kumar2020conservative} to be biased towards the given dataset. However, the policy regularization methods typically require an accurate estimation of the behavior policy. The previous state-of-the-art method CQL \cite{kumar2020conservative} instead learns conservative Q-values without estimating the behavior policy by penalizing the Q-values of OOD actions by
\begin{align*}
    \min_{\phi}~ J_q(Q_\phi) + \alpha \Big( \mathbb{E}_{\bfs \sim \mathcal{D}, \bfa \sim \mu(\cdot \mid \bfs) } \left[Q_\phi\left(\bfs, \bfa\right)\right] -  \mathbb{E}_{(\bfs, \bfa) \sim \mathcal{D}}\left[ Q_\phi\left(\mathbf{s},\mathbf{a}\right)\right] \Big),
\end{align*}
where $\mu$ is an approximation of the policy that maximizes the current Q-function. While CQL does not need explicit behavior policy estimation, it requires sampling from an appropriate action distribution $\mu(\cdot \mid \bfs)$. 

\section{Uncertainty penalization with Q-ensemble}
\label{sec:unc_pen}

\begin{figure}[ht]
    \centering
	\begin{subfigure}{.45\textwidth}
		\centering
		\includegraphics[width=\linewidth]{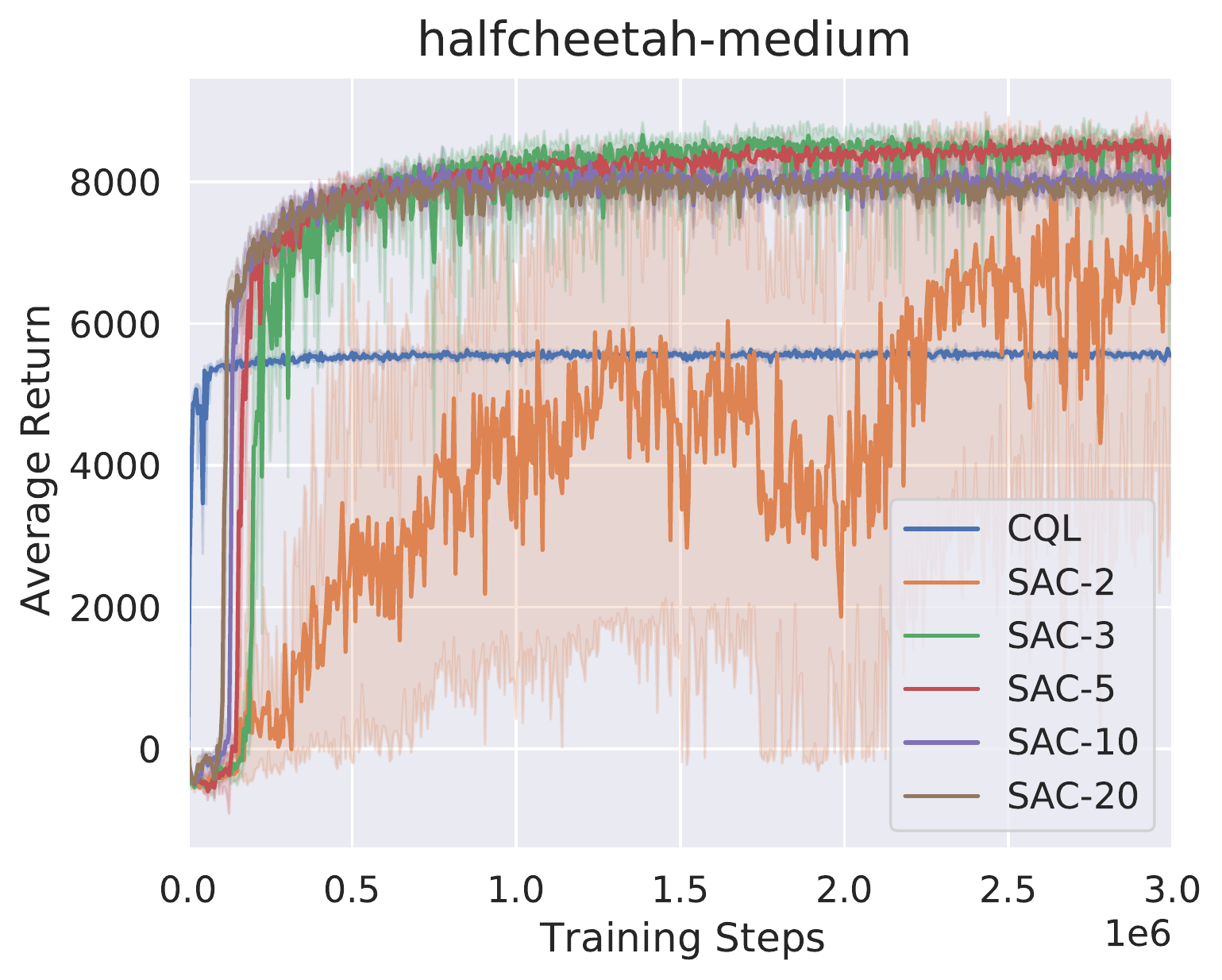}
	\end{subfigure}
	\hspace{0.2cm}
	\begin{subfigure}{.45\textwidth}
		\centering
		\includegraphics[width=\linewidth]{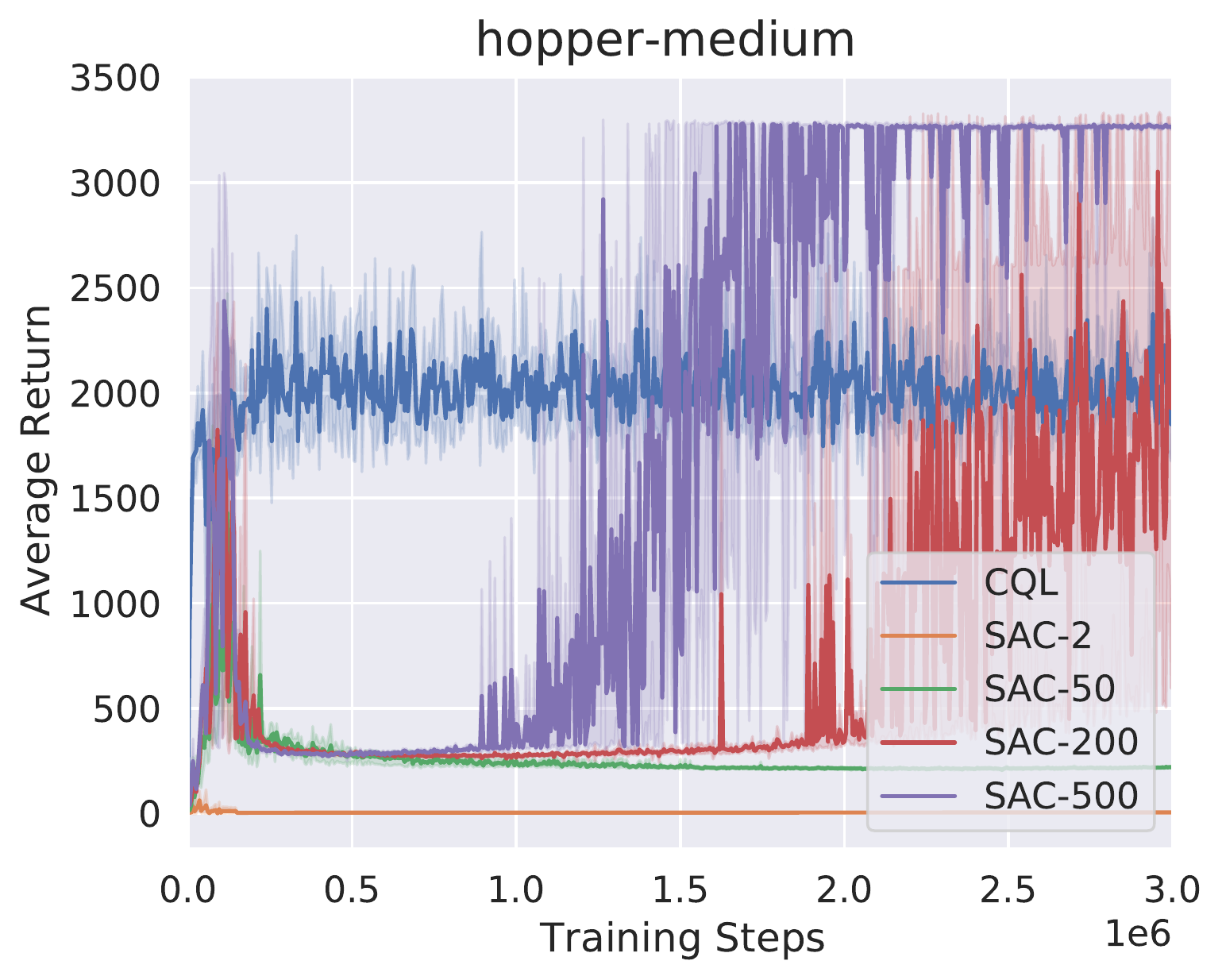}
	\end{subfigure}
	\caption{Performance of SAC-$N$ on halfcheetah-medium and hopper-medium datasets while varying $N$, compared to CQL. `Average Return' denotes the undiscounted return of each policies on evaluation. Results averaged over 4 seeds.}
	\label{fig:preliminary_results}
\end{figure}

In this section, we turn our attention to a conventional technique from online RL, Clipped Double Q-learning \cite{fujimoto2018addressing}, which uses the minimum value of two parallel Q-networks as the Bellman target: $y = r(\mathbf{s}, \mathbf{a}) + \gamma\ \mathbb{E}_{\mathbf{\mathbf{a'} \sim \pi_\theta (\cdot \mid \mathbf{s'})}}\left[ \min_{j=1,2} Q_{\phi'_j}\left(\mathbf{s'}, \mathbf{a'} \right)\right]$. Although this technique was originally proposed in online RL to mitigate the overestimation from general prediction errors, some offline RL algorithms \cite{fujimoto2019off, kumar2019stabilizing, wu2019behavior} also utilize this technique to enforce their Q-value estimates to be more pessimistic. However, the isolated effect of the clipped Q-learning in offline RL was not fully analyzed in the previous works, as they use the technique only as an auxiliary term that adds up to their core methods.

To examine the ability of clipped Q-learning to prevent the overestimation in offline RL on its own, we modify SAC \cite{haarnoja2018soft} by increasing the number of Q-ensembles from $2$ to $N$:
\small
\begin{align}
    \label{eqn:sac_n_q}
	&\min_{\phi_i}~ \mathbb{E}_{\mathbf{s},\mathbf{a},\mathbf{s'} \sim \mathcal{D}} \left[ \left(Q_{\phi_i}(\mathbf{s},\mathbf{a}) -  \left( r(\mathbf{s}, \mathbf{a}) + \gamma\ \mathbb{E}_{\mathbf{\mathbf{a'} \sim \pi_\theta (\cdot \mid \mathbf{s'})}} \left[ \min\limits_{j=1,\ldots,N} Q_{\phi_j'}\left(\mathbf{s'}, \mathbf{a'} \right) -\beta \log \pi_{\theta}\left(\mathbf{a'} \mid \mathbf{s'}\right) \right] \right) \right)^2 \right] \nonumber \\ 
	&\max_\theta~ \mathbb{E}_{\mathbf{s} \sim \mathcal{D}, \mathbf{a} \sim \pi_\theta (\cdot \mid \mathbf{s})} \left[ \min_{j=1,\ldots,N} Q_{\phi_j} (\mathbf{s}, \mathbf{a}) - \beta \log \pi_{\theta}\left(\mathbf{a} \mid \mathbf{s}\right) \right],
\end{align}
\normalsize
for $i=1,\ldots,N$. We denote this modified algorithm as SAC-$N$.

\Cref{fig:preliminary_results} shows the preliminary experiments on D4RL halfcheetah-medium and hopper-medium datasets \cite{fu2020d4rl} while varying $N$. Note that these datasets are constructed from suboptimal behavior policies. Surprisingly, as we gradually increase $N$, we can successfully find policies that outperform the previous state-of-the-art method (CQL) by a large margin. In fact, as we will present in \Cref{sec:exp}, SAC-$N$ outperforms CQL on various types of environments and data-collection policies.

To understand why this simple technique works so well, we can first interpret the clipping procedure (choosing the minimum value from the ensemble) as penalizing state-action pairs with high-variance Q-value estimates, which encourages the policy to favor actions that appeared in the dataset \cite{fujimoto2019off}. The dataset samples will naturally have lower variance compared to the OOD samples as the Bellman residual term in \Cref{eqn:sac_n_q} explicitly aligns the Q-value predictions for the dataset samples. More formally, we can regard this difference in variance as accounting for \textit{epistemic uncertainty} \cite{clements2019estimating} which refers to the uncertainty stemming from limited data and knowledge.

Utilization of the clipped Q-value relates to methods that consider the confidence bound of the Q-value estimates \cite{snoek2012practical}. Online RL methods typically utilize the Q-ensemble to form an optimistic estimate of the Q-value, by adding the standard deviation to the mean of the Q-ensembles \cite{lee2021sunrise}. This optimistic Q-value, also known as the upper-confidence bound (UCB), can encourage the exploration of unseen actions with high uncertainty. However, in offline RL, the dataset available during training is fixed, and we have to focus on \textit{exploiting} the given data. For this purpose, it is natural to utilize the lower-confidence bound (LCB) of the Q-value estimates, for example by subtracting the standard deviation from the mean, which allows us to avoid risky state-actions. 


The clipped Q-learning algorithm, which chooses the worst-case Q-value instead to compute the pessimistic estimate, can also be interpreted as utilizing the LCB of the Q-value predictions. Suppose $Q(\mathbf{s}, \mathbf{a})$ follows a Gaussian distribution with mean $m(\mathbf{s}, \mathbf{a})$ and standard deviation $\sigma(\mathbf{s}, \mathbf{a})$. Also, let $\{Q_{j}(\mathbf{s}, \mathbf{a})\}_{j=1}^N$ be realizations of $Q(\mathbf{s}, \mathbf{a})$. Then, we can approximate the expected minimum of the realizations following the work of \citet{royston1982expected} as
\begin{align}
    \label{eqn:min_std}
    \mathbb{E} \left[ \min_{j=1, \ldots, N} Q_{j} (\mathbf{s}, \mathbf{a}) \right] \approx m(\mathbf{s}, \mathbf{a}) - \Phi^{-1}\left( \frac{N - \frac{\pi}{8}}{N - \frac{\pi}{4} + 1} \right) \sigma(\mathbf{s}, \mathbf{a}),
\end{align}
where $\Phi$ is the CDF of the standard Gaussian distribution. This relation indicates that using the clipped Q-value is similar to penalizing the ensemble mean of the Q-values with the standard deviation scaled by a coefficient dependent on $N$.

\begin{figure}[ht]
	\centering
	\begin{subfigure}{.31\textwidth}
		\centering
		\includegraphics[width=\linewidth]{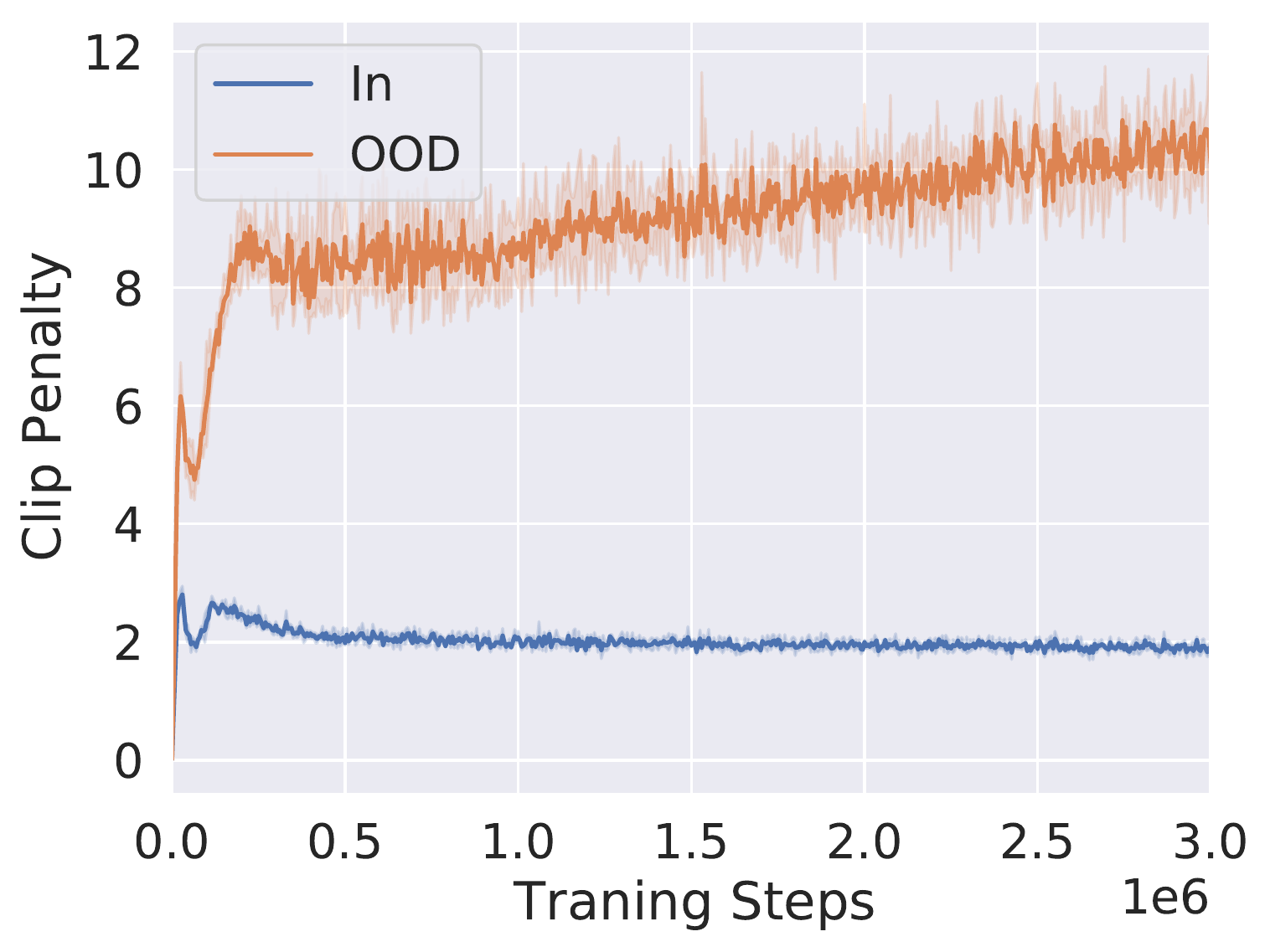}
		\caption{Clip penalty}
		\label{fig:min_penalty}
	\end{subfigure}
	\begin{subfigure}{.31\textwidth}
		\centering
		\includegraphics[width=\linewidth]{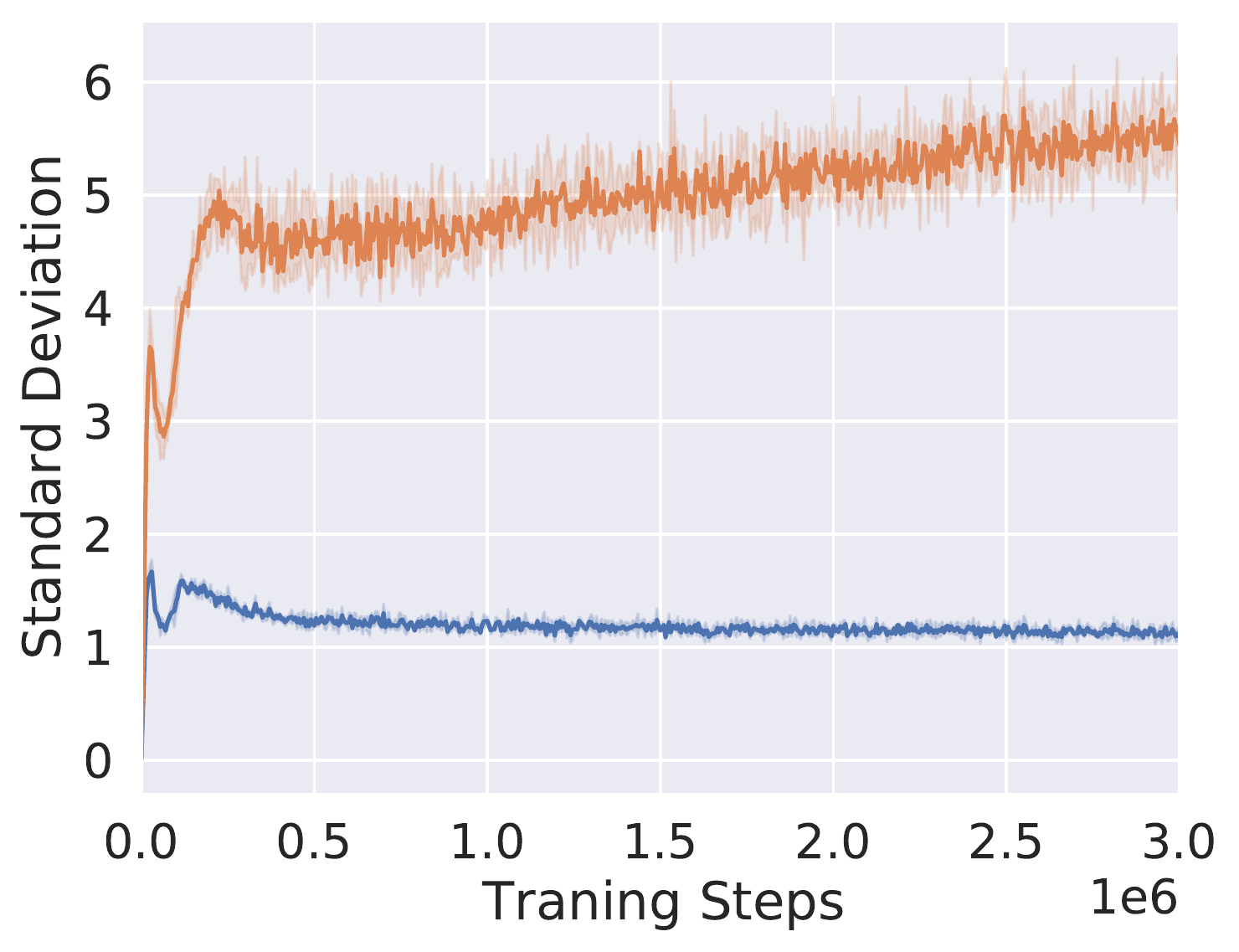}
		\caption{Standard deviation}
		\label{fig:std}
	\end{subfigure}
	\begin{subfigure}{.33\textwidth}
		\centering
		\includegraphics[width=\linewidth]{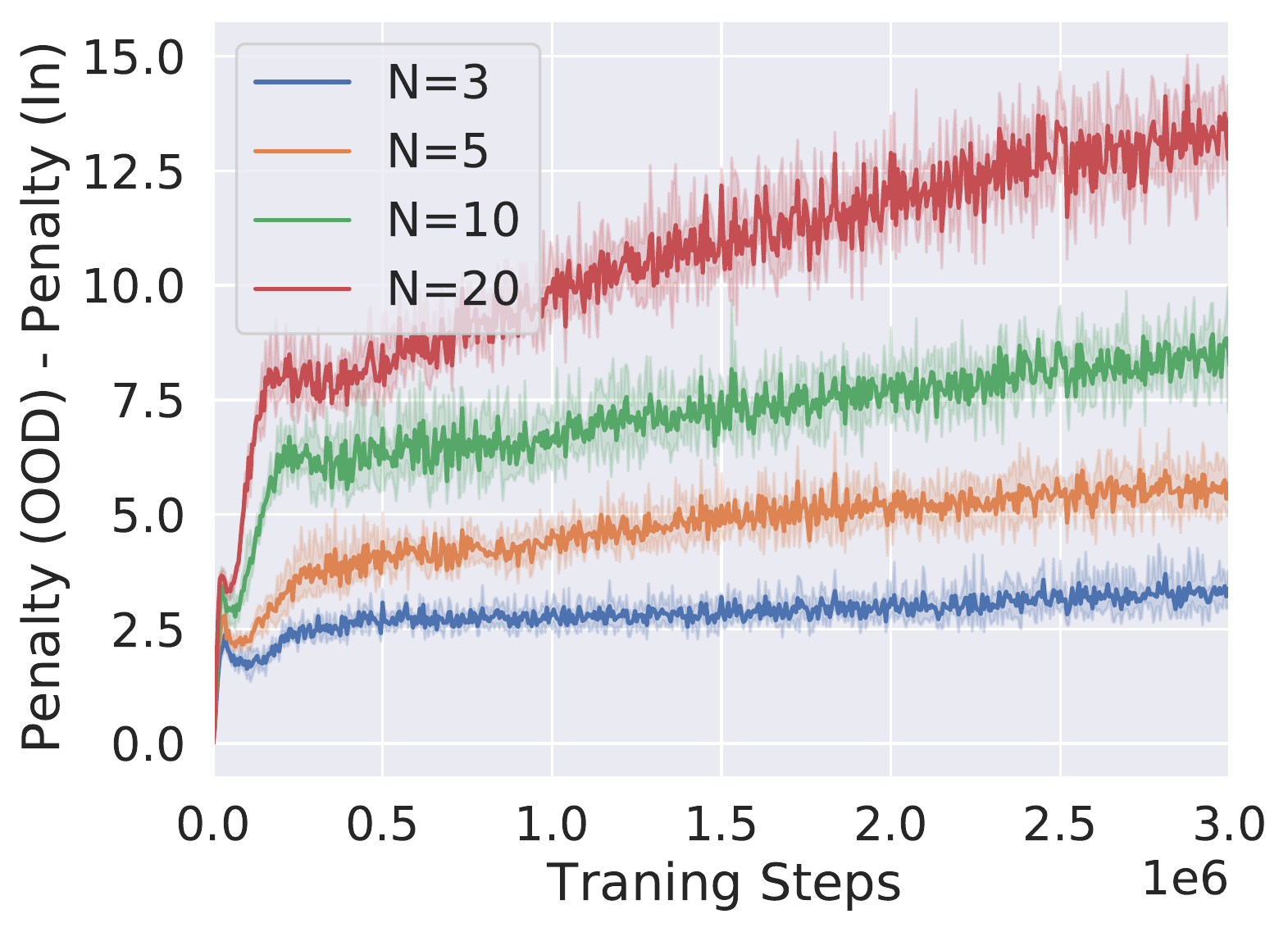}
		\caption{Clip penalty gap}
		\label{fig:n_sweep}
	\end{subfigure}
	\caption{(a) and (b) each plots the size of the clip penalty and the standard deviation of the Q-value estimates for in-distribution (behavior) and OOD (random) actions while training SAC-$10$ on halfcheetah-medium dataset. (c) plots the gap of the clip penalty between the in-distribution and OOD actions while varying $N$. Results averaged over 4 seeds.}
	\label{fig:halfcheetah_uncertainty}
\end{figure}

We now move on to the empirical analysis of the clipped Q-learning. \Cref{fig:min_penalty} compares the strength of the uncertainty penalty on in-distribution and OOD actions. Specifically, we compare actions sampled from two types of policies: (1) the behavior policy which was used to collect the dataset, and (2) the random policy which samples actions uniformly from the action space. For each policy, we measure the size of the penalty from the clipping as $\mathbb{E}_{\mathbf{s} \sim \mathcal{D},\mathbf{a} \sim \pi(\cdot \mid \mathbf{s})} [\frac{1}{N}\sum_{j=1}^N Q_{\phi_j}(\mathbf{s}, \mathbf{a}) - \min_{j=1,\ldots,N} Q_{\phi_j} (\mathbf{s}, \mathbf{a})]$. \Cref{fig:min_penalty} shows that the clipping term penalizes the random state-action pairs much stronger than the in-distribution pairs throughout the training. For comparison, we also measure the standard deviation of the Q-values for each policy. The results in \Cref{fig:std} show that as we conjectured, the Q-value predictions for the OOD actions have a higher variance. We also find that the size of the penalty and the standard deviation are highly correlated, as we noted in \Cref{eqn:min_std}.

As we observe that OOD actions have higher variance on Q-value estimates, the effect of increasing $N$ becomes obvious: it strengthens the penalty applied to the OOD samples compared to the dataset samples. To verify this, we measured the relative penalty applied to the OOD samples in \Cref{fig:n_sweep} and found that indeed the OOD samples are penalized relatively further as $N$ increases. 

\section{Ensemble gradient diversification}
\label{sec:ens_grad_div}

Even though SAC-$N$ outperforms existing methods on various tasks, it sometimes requires an excessively large number of ensembles to learn stably (\eg, $N\!=\!500$ for hopper-medium). While investigating its reason, we found that the performance of SAC-$N$ is negatively correlated with the degree to which the input gradients of Q-functions $\nabla_\bfa Q_{\phi_j} (\bfs, \bfa)$ are aligned, which increases with $N$. \Cref{fig:cos_sim} measures the minimum cosine similarity between the gradients of the Q-functions $\min_{i \neq j} \langle \nabla_{\mathbf{a}} Q_{\phi_i}(\mathbf{s}, \mathbf{a}), \nabla_{\mathbf{a}} Q_{\phi_j}(\mathbf{s}, \mathbf{a}) \rangle$ to examine the alignment of the gradients while varying $N$ on the D4RL hopper-medium dataset. The results imply that the performance of the learned policy degrades significantly when the Q-functions share a similar local structure.

\begin{figure}[H]
    \centering
	\begin{subfigure}{.48\textwidth}
		\centering
		\includegraphics[scale=0.23]{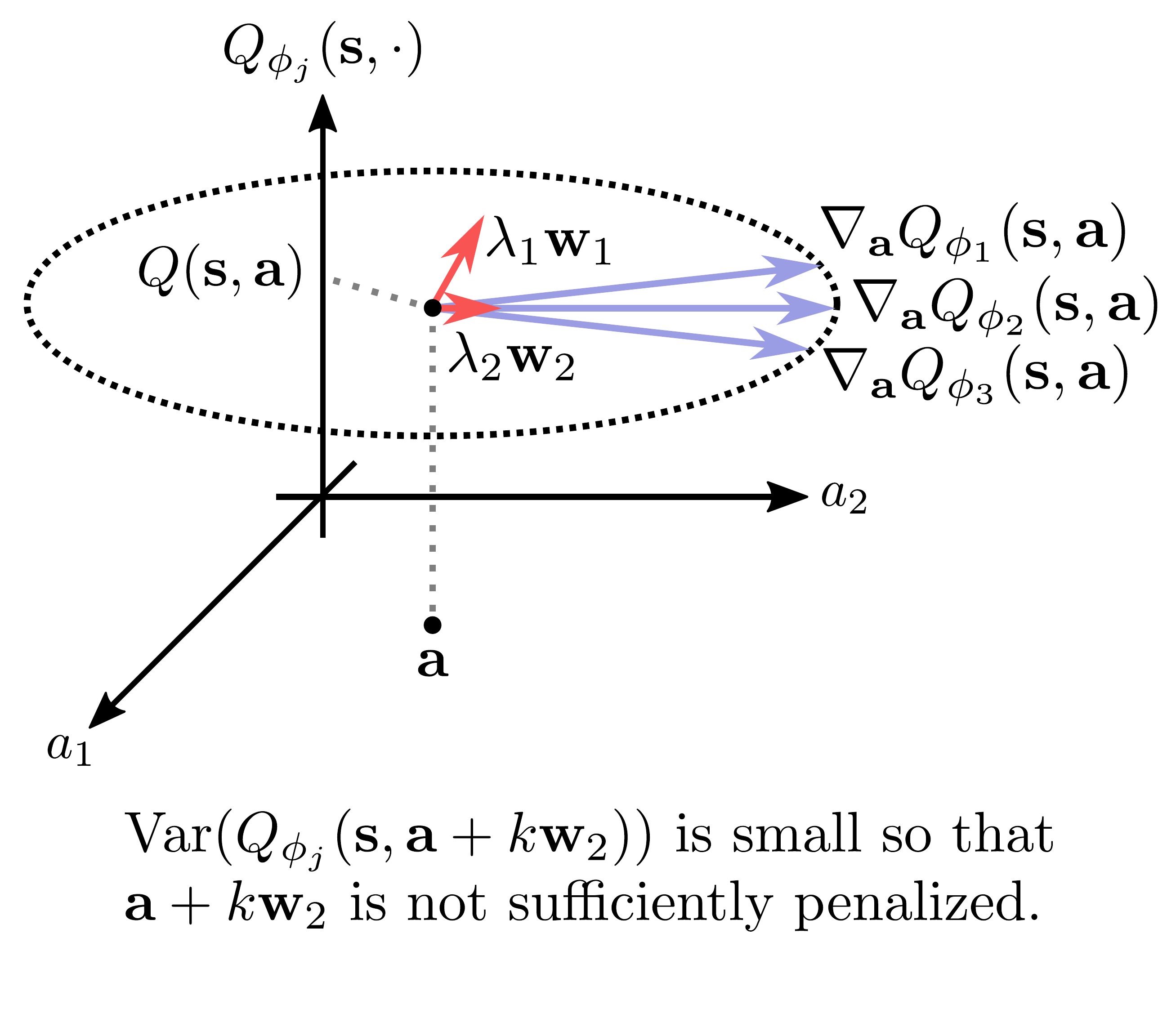}
		\caption{Without ensemble gradient diversification}
		\label{fig:without_diversification}
	\end{subfigure}
	\begin{subfigure}{.48\textwidth}
		\centering
		\includegraphics[scale=0.23]{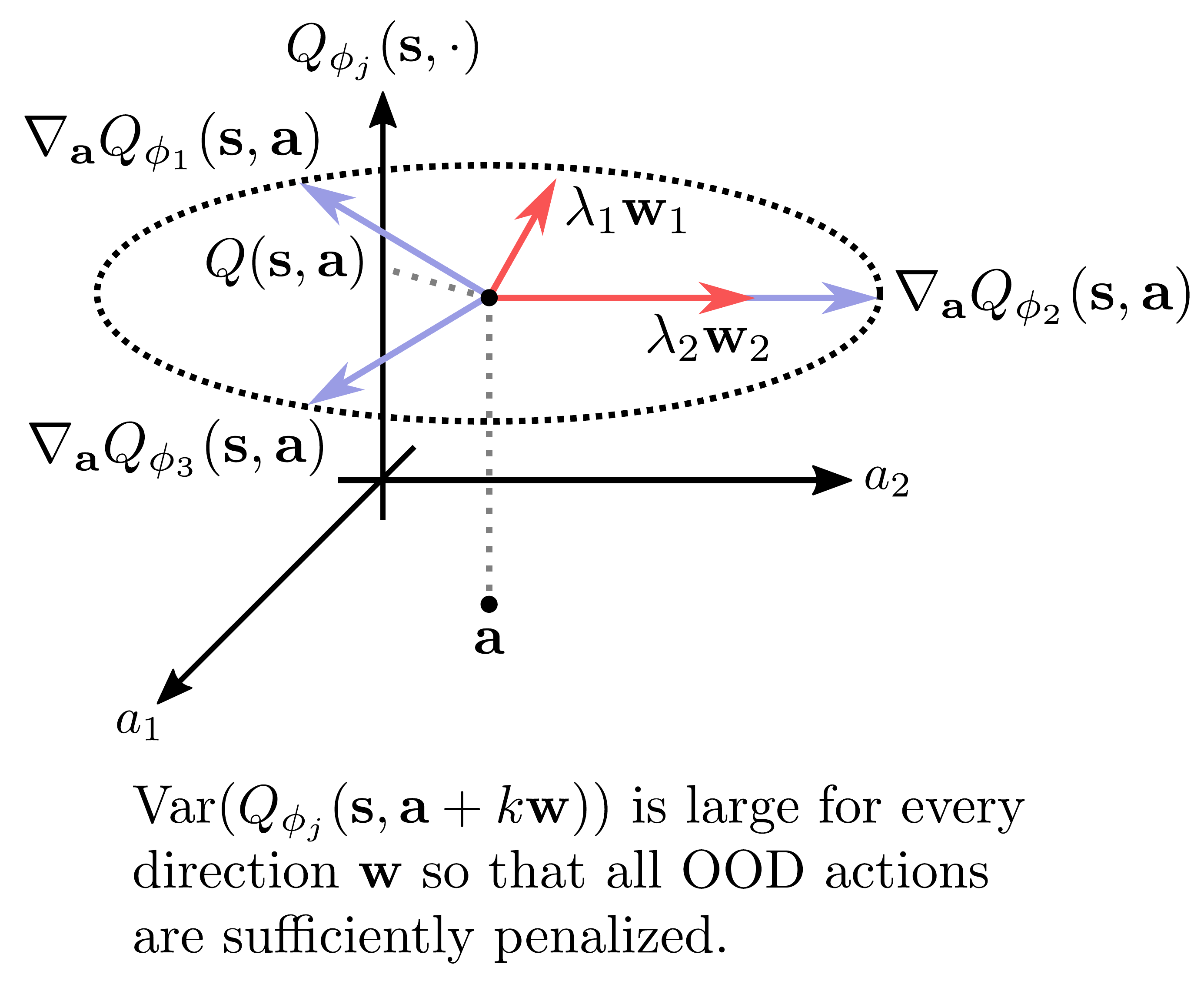}
		\caption{With ensemble gradient diversification}
		\label{fig:with_diversification}
	\end{subfigure}
	\caption{Illustration of the ensemble gradient diversification. The vector $\lambda_i \mathbf{w}_i$ represents the normalized eigenvector $\mathbf{w}_i$ of $\mathrm{Var}(\nabla_{\mathbf{a}} Q_{\phi_j}(\mathbf{s}, \mathbf{a}))$ multiplied by its eigenvalue $\lambda_i$.}
	\label{fig:ens_grad_div}
\end{figure}

\begingroup
\setlength{\columnsep}{6pt}%

\begin{wrapfigure}{r}{0.40\linewidth}
	\centering
	\vspace{-0.75em}
    \includegraphics[scale=0.33]{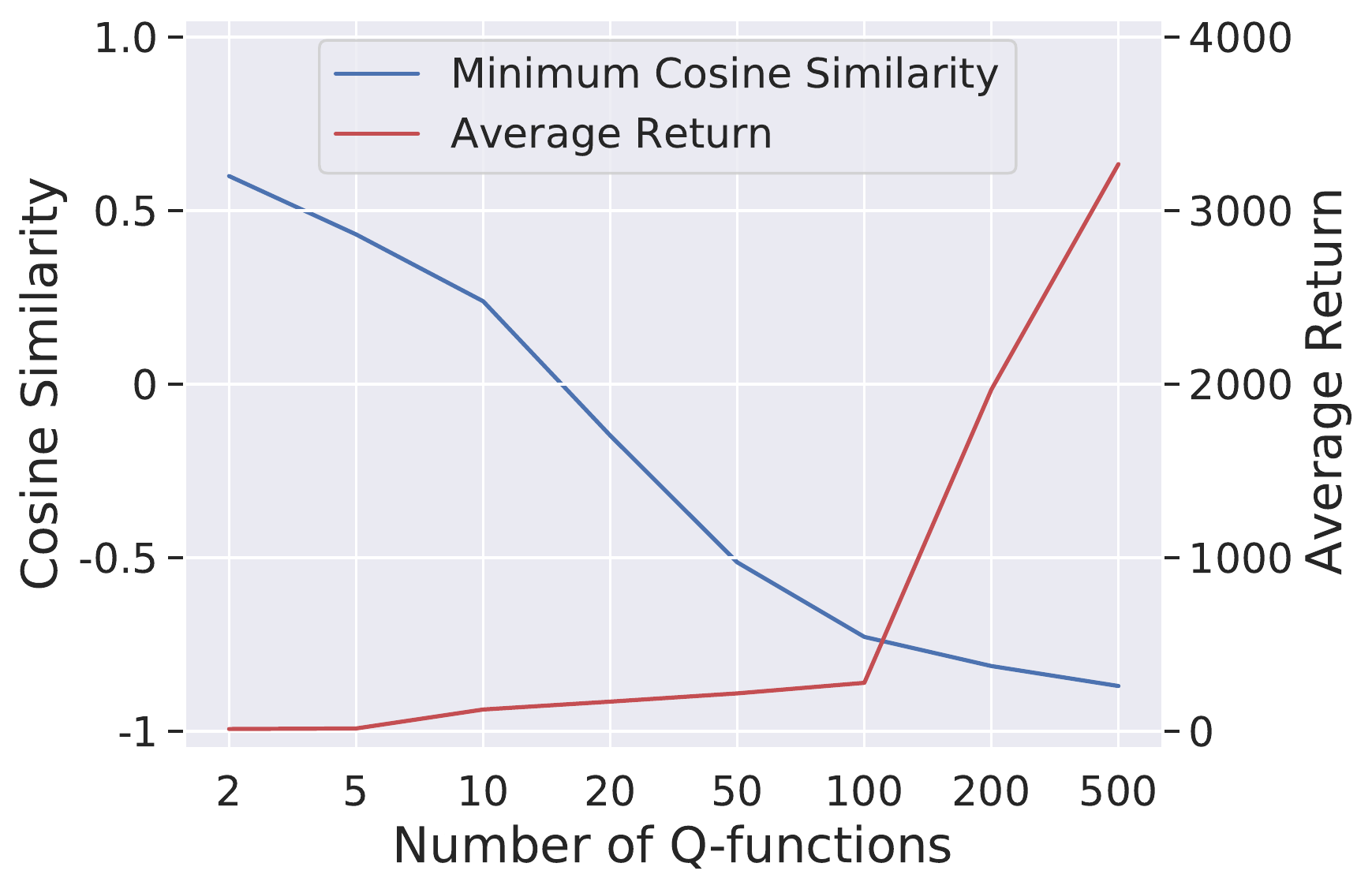}
    \caption{Plot of the minimum cosine similarity between the input gradients of Q-functions and the average return while varying the number of Q-functions.}
    \label{fig:cos_sim} 
\end{wrapfigure}

We now show that the alignment of the input gradients can induce insufficient penalization of near-distribution data points, which leads to requiring a large number of ensemble networks. Let $\nabla_{\mathbf{a}} Q_{\phi_j} (\mathbf{s}, \mathbf{a})$ be the gradient of the $j$-th Q-function with respect to the behavior action $\mathbf{a}$ and assume the gradient is normalized for simplicity. If the gradients of the Q-functions are well-aligned as illustrated in \Cref{fig:without_diversification}, then there exists a unit vector $\mathbf{w}$ such that the Q-values for the OOD actions along the direction of $\mathbf{w}$ have a low variance. To show this, we first assume the Q-value predictions for the in-distribution state-action pairs coincide, \ie, $Q_{\phi_j}(\bfs, \bfa) = Q(\bfs, \bfa)$ for $j = 1, \ldots, N$. Note that this can be optimized by minimizing the Bellman error. Then, using the first-order Taylor approximation, the sample variance of the Q-values at an OOD action along $\mathbf{w}$ can be represented as

\endgroup

\begin{align*}
    \mathrm{Var} \left( Q_{\phi_j} (\mathbf{s}, \mathbf{a} + k\mathbf{w}) \right) & \approx \mathrm{Var} \left( Q_{\phi_j}(\mathbf{s}, \mathbf{a}) + k \left\langle \mathbf{w}, \nabla_{\mathbf{a}} Q_{\phi_j} (\mathbf{s}, \mathbf{a}) \right\rangle \right) \nonumber \\
    &= \mathrm{Var} \left( Q(\mathbf{s}, \mathbf{a}) + k \left\langle \mathbf{w},  \nabla_{\mathbf{a}} Q_{\phi_j} (\mathbf{s}, \mathbf{a}) \right\rangle \right)  \nonumber \\
    &= k^2 \mathrm{Var} \left( \left\langle \mathbf{w}, \nabla_{\mathbf{a}} Q_{\phi_j} (\mathbf{s}, \mathbf{a})  \right\rangle  \right)\nonumber \\
    &= k^2 \mathbf{w}^\intercal \mathrm{Var} \left( \nabla_{\mathbf{a}} Q_{\phi_j} (\mathbf{s}, \mathbf{a}) \right) \mathbf{w},
\end{align*}
where $\langle\cdot,\cdot\rangle$ denotes an inner-product, $k\in\mathbb{R}$, and $\mathrm{Var} \left( \nabla_{\mathbf{a}} Q_{\phi_j} (\mathbf{s}, \mathbf{a}) \right)$ is the sample variance matrix for the input gradients $\nabla_{\mathbf{a}} Q_{\phi_j} (\mathbf{s}, \mathbf{a})$. One interesting property of the variance matrix is that its total variance, which is equivalent to the sum of its eigenvalues, can be represented as a function of the norm of the average gradients by \Cref{lemma:trace}.
\begin{lemma}
    \label{lemma:trace}
    The total variance of the matrix $\mathrm{Var} \left( \nabla_{\mathbf{a}} Q_{\phi_j} (\mathbf{s}, \mathbf{a}) \right)$ is equal to $1 - \| \bar{q} \|_2^2$, where $\bar{q} = \frac{1}{N}\sum_{j=1}^N \nabla_{\mathbf{a}} Q_{\phi_j} (\mathbf{s}, \mathbf{a})$.
\end{lemma}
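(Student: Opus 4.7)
The plan is to reduce the claim to the standard identity that the trace of a sample covariance matrix equals the average squared norm of the centered samples, and then exploit the normalization assumption $\|\nabla_{\mathbf{a}} Q_{\phi_j}(\mathbf{s}, \mathbf{a})\|_2 = 1$ stated just before the lemma.

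First I would recall that the total variance of a symmetric positive semi-definite matrix is, by definition, the sum of its eigenvalues, which equals its trace. Writing $q_j := \nabla_{\mathbf{a}} Q_{\phi_j}(\mathbf{s}, \mathbf{a})$ and $\bar{q} = \tfrac{1}{N}\sum_{j=1}^N q_j$, the sample variance matrix is
\[
\mathrm{Var}(q_j) \;=\; \tfrac{1}{N}\sum_{j=1}^N (q_j - \bar{q})(q_j - \bar{q})^\intercal,
\]
so its trace is $\tfrac{1}{N}\sum_{j=1}^N \|q_j - \bar{q}\|_2^2$ by the linearity of trace and the identity $\mathrm{tr}(vv^\intercal) = \|v\|_2^2$.

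Next I would expand the squared norms: $\|q_j - \bar{q}\|_2^2 = \|q_j\|_2^2 - 2\langle q_j, \bar{q}\rangle + \|\bar{q}\|_2^2$. Summing and dividing by $N$, the cross term becomes $-2\langle \bar{q}, \bar{q}\rangle = -2\|\bar{q}\|_2^2$, which combines with the $\|\bar{q}\|_2^2$ term to yield $\tfrac{1}{N}\sum_j \|q_j\|_2^2 - \|\bar{q}\|_2^2$. Applying the normalization $\|q_j\|_2 = 1$ gives $1 - \|\bar{q}\|_2^2$, completing the argument.

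There is no genuine obstacle here; the statement is essentially the well-known bias--variance style decomposition of total variance for unit-norm vectors. The only thing worth flagging is a convention issue: the lemma implicitly uses the $\tfrac{1}{N}$ normalization (the maximum-likelihood sample variance) rather than the unbiased $\tfrac{1}{N-1}$ version, since otherwise one would pick up an extra factor of $N/(N-1)$. This matches the usage of $\mathrm{Var}$ elsewhere in the section, so I would state this convention once at the start of the proof and then carry out the two-line computation above.
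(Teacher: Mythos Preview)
Your proposal is correct and follows essentially the same approach as the paper: both write the sample variance matrix as $\tfrac{1}{N}\sum_j (q_j - \bar{q})(q_j - \bar{q})^\intercal$, reduce its trace to $\tfrac{1}{N}\sum_j \|q_j - \bar{q}\|_2^2$ (the paper via $\mathrm{tr}(AB)=\mathrm{tr}(BA)$, you via $\mathrm{tr}(vv^\intercal)=\|v\|_2^2$, which is the same thing), expand the square, and invoke $\|q_j\|_2 = 1$. Your remark about the $\tfrac{1}{N}$ convention is apt and matches the paper's usage.
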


Let $\lambda_{\mathrm{min}}$ be the smallest eigenvalue of $\mathrm{Var} \left( \nabla_{\mathbf{a}} Q_{\phi_j} (\mathbf{s}, \mathbf{a}) \right)$ and $\mathbf{w}_{\mathrm{min}}$ be the corresponding normalized eigenvector. Also, let $\epsilon>0$ be the value such that $\min_{i \neq j} \left\langle \nabla_{\mathbf{a}} Q_{\phi_i}(\mathbf{s}, \mathbf{a}), \nabla_{\mathbf{a}} Q_{\phi_j}(\mathbf{s}, \mathbf{a}) \right\rangle = 1 - \epsilon$. Then, using \Cref{lemma:trace}, we can prove that the variance of the Q-values for an OOD action along $\mathbf{w}_{\mathrm{min}}$ is upper-bounded by some constant multiple of $\epsilon$, which is given by \Cref{prop:bound}.


\begin{proposition}
    \label{prop:bound}
    Suppose $Q_{\phi_j}(\mathbf{s}, \mathbf{a}) = Q(\mathbf{s}, \mathbf{a})$ and $Q_{\phi_j}(\mathbf{s}, \cdot)$ is locally linear in the neighborhood of $\mathbf{a}$ for all $j \in [N]$. Let $\lambda_{\mathrm{min}}$ and $\mathbf{w}_{\min}$ be the smallest eigenvalue and the corresponding normalized eigenvector of the matrix $\mathrm{Var} \left( \nabla_{\mathbf{a}} Q_{\phi_j} (\mathbf{s}, \mathbf{a}) \right)$ and $\epsilon > 0$ be the value such that $\min_{i \neq j} \left\langle \nabla_{\mathbf{a}} Q_{\phi_i}(\mathbf{s}, \mathbf{a}), \nabla_{\mathbf{a}} Q_{\phi_j}(\mathbf{s}, \mathbf{a}) \right\rangle = 1 - \epsilon$. Then, the variance of the Q-values for an OOD action in the neighborhood along the direction of $\mathbf{w}_{\mathrm{min}}$ is upper-bounded as follows:
    \begin{align*}
        \mathrm{Var} \left( Q_{\phi_j} (\mathbf{s}, \mathbf{a} + k\mathbf{w}_{\mathrm{min}}) \right) \leq \frac{1}{|\mathcal{A}|}\frac{N-1}{N} k^2 \epsilon,
    \end{align*}
where $|\mathcal{A}|$ is the action space dimension.
\end{proposition}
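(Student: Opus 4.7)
The plan is to chain together three simple reductions: a Taylor step that turns the variance of $Q$-values into a quadratic form in the gradient variance matrix, an eigenvalue averaging step that bounds $\lambda_{\min}$ by the trace divided by the dimension, and a direct computation of $\|\bar q\|_2^2$ from the pairwise inner-product assumption. Lemma \ref{lemma:trace} will serve as the bridge between steps two and three.

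First I would use the local linearity of $Q_{\phi_j}(\bfs,\cdot)$ at $\bfa$ and the hypothesis $Q_{\phi_j}(\bfs,\bfa)=Q(\bfs,\bfa)$ to write, exactly as in the displayed derivation preceding the proposition,
\[
\mathrm{Var}\!\left(Q_{\phi_j}(\bfs,\bfa+k\bfw_{\min})\right)
\;=\; k^2\,\bfw_{\min}^\intercal\,\mathrm{Var}\!\left(\nabla_{\bfa}Q_{\phi_j}(\bfs,\bfa)\right)\bfw_{\min}
\;=\; k^2\lambda_{\min},
\]
since $\bfw_{\min}$ is a unit eigenvector corresponding to $\lambda_{\min}$. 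So the whole proposition reduces to proving $\lambda_{\min}\le \tfrac{1}{|\mathcal{A}|}\tfrac{N-1}{N}\epsilon$.

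Next, because $\mathrm{Var}(\nabla_{\bfa}Q_{\phi_j}(\bfs,\bfa))$ is a symmetric positive semidefinite matrix of size $|\mathcal{A}|\times|\mathcal{A}|$, its smallest eigenvalue is at most the average of its eigenvalues, i.e.\ at most $\tfrac{1}{|\mathcal{A}|}$ times its trace. By Lemma \ref{lemma:trace} this trace equals $1-\|\bar q\|_2^2$, giving
\[
\lambda_{\min} \;\le\; \frac{1}{|\mathcal{A}|}\bigl(1-\|\bar q\|_2^2\bigr).
\]
It remains to show $1-\|\bar q\|_2^2 \le \tfrac{N-1}{N}\epsilon$. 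Expanding
\[
\|\bar q\|_2^2 \;=\; \frac{1}{N^2}\sum_{i,j}\bigl\langle\nabla_{\bfa}Q_{\phi_i},\nabla_{\bfa}Q_{\phi_j}\bigr\rangle,
\]
the $N$ diagonal terms contribute $1/N$ (unit gradients), while each of the $N(N-1)$ off-diagonal terms is at least $1-\epsilon$ by the hypothesis that the minimum pairwise cosine similarity equals $1-\epsilon$. This yields $\|\bar q\|_2^2 \ge \tfrac{1}{N}+\tfrac{N-1}{N}(1-\epsilon) = 1-\tfrac{N-1}{N}\epsilon$, so $1-\|\bar q\|_2^2 \le \tfrac{N-1}{N}\epsilon$, and chaining the three inequalities finishes the proof.

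I do not expect any real obstacle here — the argument is essentially three elementary inequalities glued together. The only place that requires mild care is the averaging step bounding $\lambda_{\min}$ by $\tfrac{1}{|\mathcal{A}|}\,\mathrm{tr}$: one should note that the variance matrix is PSD (so all eigenvalues are nonnegative and the minimum really is dominated by the mean) and that its ambient dimension is indeed $|\mathcal{A}|$, matching the denominator in the claimed bound. Everything else is bookkeeping that follows from the stated assumptions and Lemma \ref{lemma:trace}.
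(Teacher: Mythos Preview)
Your proposal is correct and follows essentially the same argument as the paper: reduce the variance to $k^2\lambda_{\min}$ via local linearity, bound $\lambda_{\min}$ by $\tfrac{1}{|\mathcal{A}|}$ times the trace, apply Lemma~\ref{lemma:trace}, and lower-bound $\|\bar q\|_2^2$ by expanding the pairwise inner products. The only difference is the order of presentation (you do the Taylor step first, the paper does it last), which is immaterial.
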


We provide the proofs in \Cref{subsec:proofs}. \Cref{prop:bound} implies that if there exists such $\epsilon>0$ that is small, which means the gradients of Q-function are well-aligned, then the variance of the Q-values for an OOD action along a specific direction will also be small. This in turn degrades the ability of the ensembles to penalize OOD actions, which ultimately leads to requiring a large number of ensemble networks.


To address this problem, we propose a regularizer that effectively increases the variance of the Q-values for near-distribution OOD actions. Note that the variance is lower-bounded by some constant multiple of the smallest eigenvalue $\lambda_{\mathrm{min}}$:
\begin{align*}
    \mathrm{Var} \left( Q_{\phi_j} (\mathbf{s}, \mathbf{a} + k\mathbf{w}) \right) &\approx  k^2 \mathbf{w}^\intercal \mathrm{Var} \left( \nabla_{\mathbf{a}} Q_{\phi_j} (\mathbf{s}, \mathbf{a}) \right) \mathbf{w} \\
    &\geq k^2 \mathbf{w}_{\mathrm{min}}^\intercal \mathrm{Var} \left( \nabla_{\mathbf{a}} Q_{\phi_j} (\mathbf{s}, \mathbf{a}) \right) \mathbf{w}_{\mathrm{min}} \\
    &= k^2 \lambda_{\mathrm{min}}.
\end{align*}
Therefore, an obvious way to increase this variance is to maximize the smallest eigenvalue of $\mathrm{Var} \left( \nabla_{\mathbf{a}} Q_{\phi_j} (\mathbf{s}, \mathbf{a}) \right)$, which can be formulated as
\begin{align*}
    \maximize_{\phi} ~ \mathbb{E}_{\mathbf{s}, \mathbf{a} \sim \mathcal{D}} \left[ \lambda_{\mathrm{min}} \left( \mathrm{Var} \left( \nabla_{\mathbf{a}} Q_{\phi_j} (\mathbf{s}, \mathbf{a}) \right) \right) \right],
\end{align*}
where $\phi$ denotes the collection of the parameters $\{\phi_j\}_{j=1}^N$. There are several methods to compute the smallest eigenvalue, such as the power method or the QR algorithm \cite{watkins1982understanding}. However, these iterative methods require constructing huge computation graphs, which makes optimizing the eigenvalue using back-propagation inefficient. Instead, we aim to maximize the sum of all eigenvalues, which is equal to the total variance. By \Cref{lemma:trace}, it is equivalent to minimizing the norm of the average gradients:
\begin{align}
    \label{eqn:diversity_3}
    \minimize_{\phi} ~ \mathbb{E}_{\mathbf{s}, \mathbf{a} \sim \mathcal{D}} \left[ \left\langle \frac{1}{N} \sum_{i=1}^N \nabla_{\mathbf{a}} Q_{\phi_i} (\mathbf{s}, \mathbf{a}), \frac{1}{N} \sum_{j=1}^N \nabla_{\mathbf{a}} Q_{\phi_j} (\mathbf{s}, \mathbf{a}) \right\rangle \right].
\end{align}
With simple modification, we can reformulate \Cref{eqn:diversity_3} as diversifying the gradients of each Q-function network for in-distribution actions:
\begin{align*}
    \minimize_{\phi} ~ J_{\mathrm{ES}}(Q_\phi) := \mathbb{E}_{\bfs, \bfa \sim \mathcal{D}}\left[ \frac{1}{N-1} \sum_{1 \leq i \neq j \leq N} \underbrace{\left\langle \nabla_\bfa Q_{\phi_i} (\bfs, \bfa), \nabla_\bfa Q_{\phi_j} (\bfs, \bfa) \right\rangle}_{\mathrm{ES}_{\phi_i,\phi_j}(\bfs, \bfa)} \right].
\end{align*}
Concretely, our final objective can be interpreted as measuring the pairwise alignment of the gradients using cosine similarity, which we denote as the Ensemble Similarity (ES) metric $\mathrm{ES}_{\phi_i,\phi_j}(\bfs, \bfa)$, and minimizing the $\mathrm{ES}$ values for every pair in the Q-ensemble with regard to the dataset state-actions. The illustration of the ensemble gradient diversification is shown in \Cref{fig:with_diversification}. Note that we instead maximize the total variance to reduce the computational burden. Nevertheless, the modified objective is closely related to maximizing the smallest eigenvalue. The detailed explanation can be found in \Cref{subsec:relationship}.



\begin{algorithm} 
	\caption{Ensemble-Diversified Actor Critic (EDAC)}
	\label{alg:edac}
	\begin{algorithmic}[1]
	\State Initialize policy parameters $\theta$, Q-function parameters \blue{$\{\phi_{j}\}_{j=1}^N$}, target Q-function parameters $\{\phi_{j}'\}_{j=1}^N$, and offline data replay buffer $\mathcal{D}$
	\MRepeat
	\State Sample a mini-batch $B=\left\{\left(\mathbf{s}, \mathbf{a}, r, \mathbf{s'} \right)\right\}$ from $\mathcal{D}$
	\State Compute target Q-values (shared by all Q-functions):
	\[
	y(r, \mathbf{s'})=r+\gamma\left(\blue{\min _{j=1,\ldots,N} Q_{\phi_{j}'}\left(\mathbf{s'}, \mathbf{a'}\right)}-\beta \log \pi_{\theta}\left(\mathbf{a'} \mid \mathbf{s'}\right)\right),\quad \mathbf{a'} \sim \pi_{\theta}\left(\cdot \mid \mathbf{s'}\right)
	\]
	\State Update each Q-function $Q_{\phi_i}$ with gradient descent using
	\[
	\nabla_{\phi_i} \frac{1}{|B|} \sum_{\left(\mathbf{s}, \mathbf{a}, r, \mathbf{s'}\right) \in B} \left( \bigg(Q_{\phi_{i}}\left(\mathbf{s}, \mathbf{a}\right)-y \left(r, \mathbf{s'}\right)\bigg)^{2} \blue{+ \frac{\eta}{N-1} \sum_{1 \leq i \neq j \leq N} \text{ES}_{\phi_i,\phi_j}(\bfs, \bfa)} \right)
	\]
	\State \begin{varwidth}[t]{\linewidth-\algorithmicindent}
	Update policy with gradient ascent using
	\[
	\nabla_{\theta} \frac{1}{|B|} \sum_{\mathbf{s} \in B}\left(\blue{\min _{j=1,\ldots,N} Q_{\phi_{j}}\left(\mathbf{s}, \tilde{\mathbf{a}}_\theta (\mathbf{s})\right)}-\beta \log \pi_{\theta}\left(\tilde{\mathbf{a}}_\theta (\mathbf{s}) \mid \mathbf{s}\right)\right),
	\]
	where $\tilde{\mathbf{a}}_\theta(\mathbf{s})$ is a sample from $\pi_\theta(\cdot \mid \mathbf{s})$ which is differentiable w.r.t.\ $\theta$ via the reparametrization trick.
	\end{varwidth}
	\vspace{0.1em}
	\State Update target networks with $\phi_i' \leftarrow \rho \phi_i'+(1-\rho) \phi_i$
	\EndRepeat
	\end{algorithmic}
\end{algorithm}

We name the resulting actor-critic algorithm as Ensemble-Diversified Actor Critic (EDAC) and present the detailed procedure in \Cref{alg:edac} (differences with the original SAC algorithm marked in blue). Note that \Cref{alg:edac} reduces to SAC-$N$ when $\eta\!=\!0$, and further reduces to vanilla SAC when also $N\!=\!2$.

\section{Experiments}
\label{sec:exp}

We evaluate our proposed methods against the previous offline RL algorithms on the standard D4RL benchmark \cite{fu2020d4rl} . Concretely, we perform our evaluation on MuJoCo Gym (\Cref{subsec:gym}) and Adroit (\Cref{subsec:adroit}) domains. We consider the following baselines: SAC, the backbone algorithm of our method, CQL, the previous state-of-the-art on the D4RL benchmark, REM \cite{agarwal2020optimistic}, an offline RL method which utilized Q-network ensemble on discrete control environments, and BC, the behavior cloning method. We evaluate each method under the normalized average return metric where the average return is scaled such that 0 and 100 each equals the performance of a random policy and an online expert policy. In addition to the performance evaluation, we compare the computational cost of each method (\Cref{subsec:compute_cost}). For the implementation details of our algorithm and the baselines, please refer to \Cref{sec:imp_details} and \Cref{sec:exp_settings}. Also, we provide more experiments such as comparison with more baselines, CQL with $N$ Q-networks, and hyperparameter sensitivity from \Cref{sec:more_baselines} to \Cref{sec:sensitivity}.

\subsection{Evaluation on D4RL MuJoCo Gym tasks}
\label{subsec:gym}

We first evaluate each method on D4RL MuJoCo Gym tasks which consist of three environments, halfcheetah, hopper, and walker2d, each with six datasets from different data-collecting policies. In detail, the considered policies are \textit{random}: a uniform random policy, \textit{expert}: a fully trained online expert, \textit{medium}: a suboptimal policy with approximately 1/3 the performance of the expert, \textit{medium-expert}: a mixture of medium and expert policies, \textit{medium-replay}: the replay buffer of a policy trained up to the performance of the medium agent, and \textit{full-replay}: the final replay buffer of the expert policy. Each dataset consists of 1M transitions except for medium-expert and medium-replay.

The experiment results in \Cref{tab:gym} show EDAC and SAC-$N$ both outperform or are competitive with the previous state-of-the-art on all of the tasks considered. Notably, the performance gap is especially high for random, medium, and medium-replay datasets, where the performances of the previous works are relatively low. Both the proposed methods achieve average normalized scores over 80, reducing the gap with the online expert by 40\% compared to CQL. While the performance of EDAC is marginally better than the performance of SAC-$N$, EDAC achieves this result with a much smaller Q-ensemble size. As noted in \Cref{fig:n_hist}, on hopper tasks, SAC-$N$ requires 200 to 500 Q-networks, while EDAC requires less than 50.

\Cref{fig:dist} compares the distance between the actions chosen by each method and the dataset actions. Concretely, we measure $\mathbb{E}_{(\bfs, \bfa) \sim \mathcal{D}, \hat{\bfa} \sim \pi_\theta(\cdot \mid \bfs)} [ \| \hat{\bfa} - \bfa \|_2^2 ]$ for EDAC, SAC-$N$, CQL, SAC-$2$, and a random policy on $\ast$-medium datasets. We find that our proposed methods choose from a more diverse range of actions compared to CQL. This shows the advantage of the uncertainty-based penalization which considers the prediction confidence other than penalizing all OOD actions.

\begin{table}[H]
	\centering
	\small
	\caption{Normalized average returns on D4RL Gym tasks, averaged over 4 random seeds. CQL (Paper) denotes the results reported in the original paper.}
	\vspace{0.2em}
	\label{tab:gym}
	\begin{adjustbox}{max width=\columnwidth}
		\begin{tabular}{l|rrrrr|r|r}
			\toprule
			\multirow{2}{*}{\textbf{Task Name}} & \multirow{2}{*}{\textbf{BC}} & \multirow{2}{*}{\textbf{SAC}} & \multirow{2}{*}{\textbf{REM}} & \textbf{CQL} & \textbf{CQL} & \textbf{SAC-$N$} & \textbf{EDAC} \\
			& & & & \textbf{(Paper)} & \textbf{(Reproduced)} & \textbf{(Ours)} & \textbf{(Ours)} \\
			\midrule
			halfcheetah-random & 2.2$\pm$0.0 & 29.7$\pm$1.4 & -0.8$\pm$1.1 & \textbf{35.4} & 31.3$\pm$3.5 & 28.0$\pm$0.9 & 28.4$\pm$1.0 \\
			halfcheetah-medium & 43.2$\pm$0.6 & 55.2$\pm$27.8 & -0.8$\pm$1.3 & 44.4 & 46.9$\pm$0.4 & \textbf{67.5$\pm$1.2} & \textbf{65.9$\pm$0.6} \\
			halfcheetah-expert & 91.8$\pm$1.5 & -0.8$\pm$1.8 & 4.1$\pm$5.7 & 104.8 & 97.3$\pm$1.1 & \textbf{105.2$\pm$2.6} & \textbf{106.8$\pm$3.4} \\
			halfcheetah-medium-expert & 44.0$\pm$1.6 & 28.4$\pm$19.4 & 0.7$\pm$3.7 & 62.4 & 95.0$\pm$1.4 & \textbf{107.1$\pm$2.0} & \textbf{106.3$\pm$1.9} \\
			halfcheetah-medium-replay & 37.6$\pm$2.1 & 0.8$\pm$1.0 & 6.6$\pm$11.0 & 46.2 & 45.3$\pm$0.3 & \textbf{63.9$\pm$0.8} & \textbf{61.3$\pm$1.9}  \\
			halfcheetah-full-replay & 62.9$\pm$0.8 & \textbf{86.8$\pm$1.0} & 27.8$\pm$35.4 & - & 76.9$\pm$0.9 & 84.5$\pm$1.2 & 84.6$\pm$0.9 \\
			\midrule
			hopper-random & 3.7$\pm$0.6 & 9.9$\pm$1.5 & 3.4$\pm$2.2 & 10.8 & 5.3$\pm$0.6 & \textbf{31.3$\pm$0.0} & \textbf{25.3$\pm$10.4} \\
			hopper-medium & 54.1$\pm$3.8 & 0.8$\pm$0.0 & 0.7$\pm$0.0 & 86.6 & 61.9$\pm$6.4 & \textbf{100.3$\pm$0.3} & \textbf{101.6$\pm$0.6}  \\
			hopper-expert & 107.7$\pm$9.7 & 0.7$\pm$0.0 & 0.8$\pm$0.0 & 109.9 & 106.5$\pm$9.1 & \textbf{110.3$\pm$0.3} & \textbf{110.1$\pm$0.1}\\
			hopper-medium-expert & 53.9$\pm$4.7 & 0.7$\pm$0.0 & 0.8$\pm$0.0 & \textbf{111.0} & 96.9$\pm$15.1 & 110.1$\pm$0.3 & 110.7$\pm$0.1 \\
			hopper-medium-replay & 16.6$\pm$4.8 & 7.4$\pm$0.5 & 27.5$\pm$15.2 & 48.6 & 86.3$\pm$7.3 & \textbf{101.8$\pm$0.5} & \textbf{101.0$\pm$0.5} \\
			hopper-full-replay & 19.9$\pm$12.9 & 41.1$\pm$17.9 & 19.7$\pm$24.6 & - & 101.9$\pm$0.6 & \textbf{102.9$\pm$0.3} & \textbf{105.4$\pm$0.7} \\
			\midrule
			walker2d-random & 1.3$\pm$0.1 & 0.9$\pm$0.8 & 6.9$\pm$8.3 & 7.0 & 5.4$\pm$1.7 & \textbf{21.7$\pm$0.0} & \textbf{16.6$\pm$7.0} \\
			walker2d-medium & 70.9$\pm$11.0 & -0.3$\pm$0.2 & 0.2$\pm$0.7 & 74.5 & 79.5$\pm$3.2 & \textbf{87.9$\pm$0.2} & \textbf{92.5$\pm$0.8} \\
			walker2d-expert & 108.7$\pm$0.2 & 0.7$\pm$0.3 & 1.0$\pm$2.3 & \textbf{121.6} & 109.3$\pm$0.1 & 107.4$\pm$2.4 & 115.1$\pm$1.9\\
			walker2d-medium-expert & 90.1$\pm$13.2 & 1.9$\pm$3.9 & -0.1$\pm$0.0 & 98.7 & 109.1$\pm$0.2 & \textbf{116.7$\pm$0.4} & \textbf{114.7$\pm$0.9} \\
			walker2d-medium-replay & 20.3$\pm$9.8 & -0.4$\pm$0.3 & 12.5$\pm$6.2 & 32.6 & 76.8$\pm$10.0 & \textbf{78.7$\pm$0.7} & \textbf{87.1$\pm$2.3}\\
			walker2d-full-replay & 68.8$\pm$17.7 & 27.9$\pm$47.3 & -0.2$\pm$0.3 & - & 94.2$\pm$1.9 & \textbf{94.6$\pm$0.5} & \textbf{99.8$\pm$0.7}\\
			\midrule
			Average & 49.9 & 16.2 & 6.2 & - & 73.7 & \textbf{84.5} & \textbf{85.2} \\
			\bottomrule
		\end{tabular}
	\end{adjustbox}
\end{table}

\begin{figure}[ht]
    \centering
    \begin{tikzpicture}[define rgb/.code={\definecolor{mycolor}{RGB}{#1}},
                    rgb color/.style={define rgb={#1},mycolor}]
\begin{groupplot}[
        group style={columns=3, horizontal sep=1.0cm, 
        vertical sep=0.0cm},
        ]

\definecolor{seaborn}{RGB}{230,230,242}

\nextgroupplot[
            width=5.3cm,
            height=4.5cm,
            ybar,
            bar width=6pt,
            grid=major,
            scaled ticks = false,
            xmajorticks=false,
            xlabel near ticks,
            ylabel near ticks,
            tick pos=left,
            tick label style={font=\scriptsize},
            ytick={0, 2, 3, 5, 10},            
            xtick={0, 0.25, 0.5, 0.75},
            xlabel shift=0.04cm,
            ylabel shift=-0.2cm,
            label style={font=\scriptsize},
            ylabel={$N$},
            xlabel style={align=center, font=\scriptsize},
            xlabel={\ random\quad medium\quad expert\quad M-E\quad},
            xmin=0,
            xmax=1,
            ymin=0,
            ymax=11,
            legend pos=north west,
            title style={at={(0.5,0.94)}, font=\small},
            title=halfcheetah,
            axis background/.style={fill=seaborn!40}
            ]
    
\addplot coordinates {
             (0.1, 2)
             (0.35, 3)
             (0.6, 5)
             (0.85, 10)
         };
\addlegendentry{\tiny{SAC-N}}
         
\addplot coordinates {
             (0.1, 2)
             (0.35, 3)
             (0.6, 5)
             (0.85, 10)
         };         
\addlegendentry{\tiny{EDAC}}

\nextgroupplot[
            width=5.3cm,
            height=4.5cm,
            ybar,
            bar width=6pt,
            grid=major,
            scaled ticks = false,
            xmajorticks=false,
            xlabel near ticks,
            ylabel near ticks,
            tick pos=left,
            tick label style={font=\scriptsize},
            ytick={0, 50, 100, 200, 500},            
            xtick={0, 0.25, 0.5, 0.75},
            xlabel shift=0.04cm,
            ylabel shift=-0.2cm,
            ylabel style={font=\scriptsize},
            ylabel={$N$},
            xlabel style={align=center, font=\scriptsize},
            xlabel={\ random\quad medium\quad expert\quad M-E\quad},
            xmin=0,
            xmax=1,
            ymin=0,
            ymax=550,
            title style={at={(0.5,0.94)}, font=\small},
            title=hopper,
            axis background/.style={fill=seaborn!40}
            ]
            
\node at (rel axis cs:0.385,0.08) {\tiny{20}};

\addplot coordinates {
             (0.1, 100)
             (0.35, 500)
             (0.6, 500)
             (0.85, 200)
         };
\addplot coordinates {
             (0.1, 50)
             (0.35, 20)
             (0.6, 50)
             (0.85, 50)
         };

\nextgroupplot[
            width=5.3cm,
            height=4.5cm,
            ybar,
            bar width=6pt,
            grid=major,
            scaled ticks = false,
            xmajorticks=false,
            xlabel near ticks,
            ylabel near ticks,
            tick pos=left,
            tick label style={font=\scriptsize},
            ytick={0, 10, 20, 100},            
            xtick={0, 0.25, 0.5, 0.75},
            xlabel shift=0.04cm,
            ylabel shift=-0.4cm,
            label style={font=\scriptsize},
            ylabel={$N$},
            xlabel style={align=center, font=\scriptsize},
            xlabel={\ random\quad medium\quad expert\quad M-E\quad},
            xmin=0,
            xmax=1,
            ymin=0,
            ymax=110,
            title style={at={(0.5,0.94)}, font=\small},
            title=walker2d,
            axis background/.style={fill=seaborn!40}            
            ]
            
\addplot coordinates {
             (0.1, 20)
             (0.35, 10)
             (0.6, 100)
             (0.85, 20)
         };
\addplot coordinates {
             (0.1, 10)
             (0.35, 10)
             (0.6, 10)
             (0.85, 10)
         };

\end{groupplot}
\end{tikzpicture}
    \caption{Minimum number of Q-ensembles ($N$) required to achieve the performance reported in \Cref{tab:gym}. M-E denotes medium-expert. We omit the results of medium-replay and full-replay as SAC-$N$ already works well with a small number of ensembles (less than or equal to 5). For more details of the experiment, please refer to \Cref{sec:exp_settings}.}
    \label{fig:n_hist}
\end{figure}

\begin{figure}[ht]
	\centering
	\includegraphics[width=0.32\textwidth]{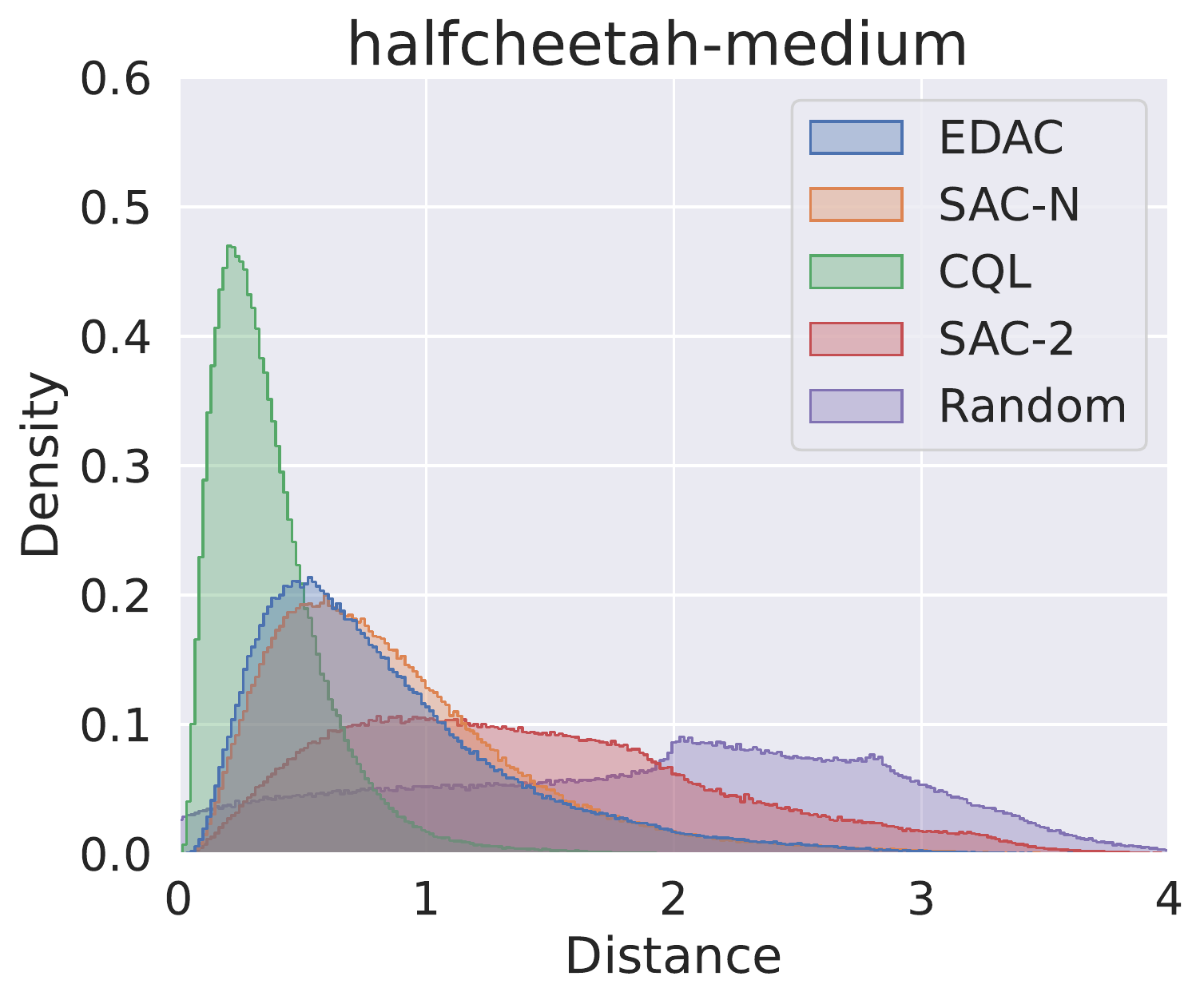}
	\includegraphics[width=0.32\textwidth]{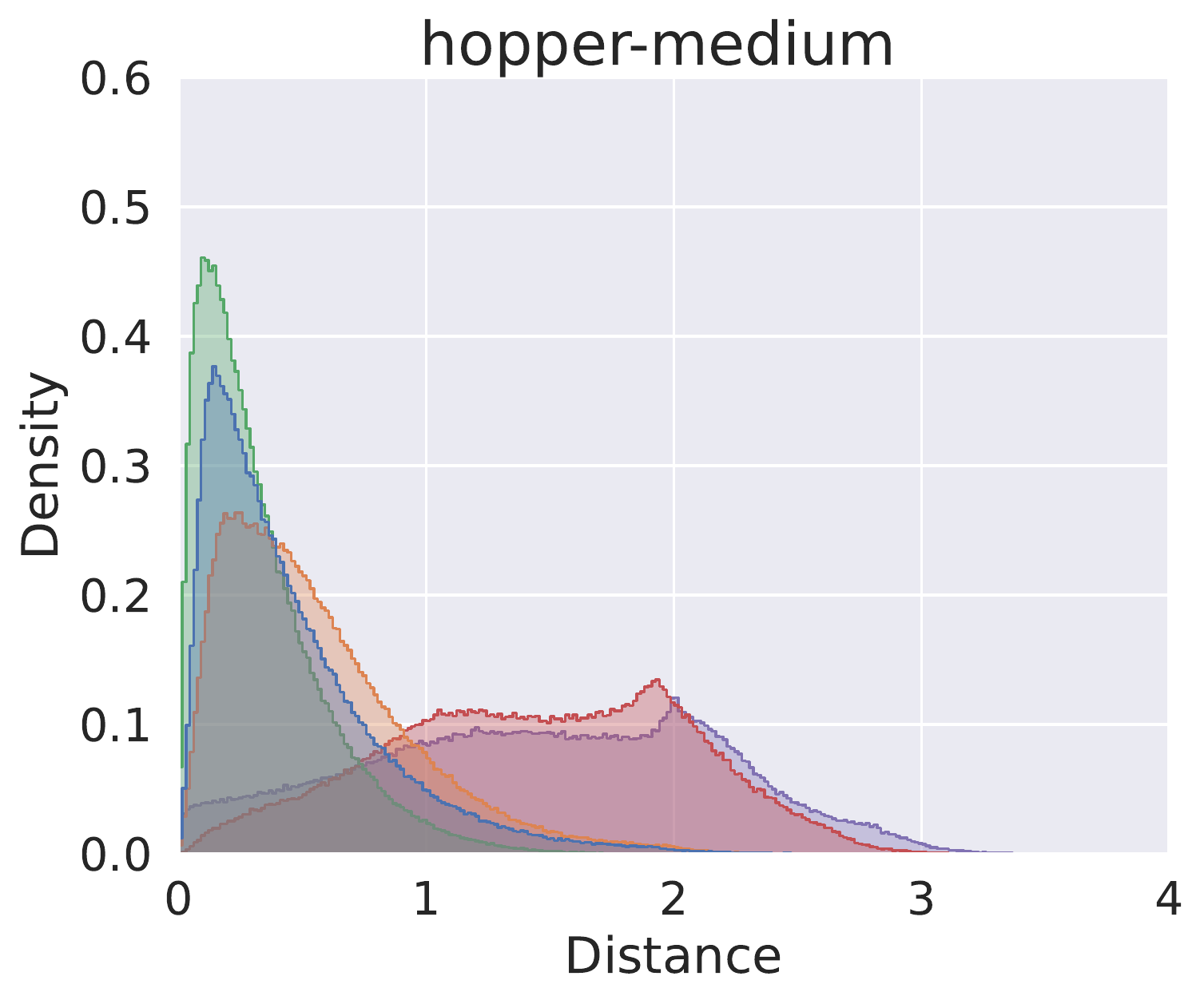}
	\includegraphics[width=0.32\textwidth]{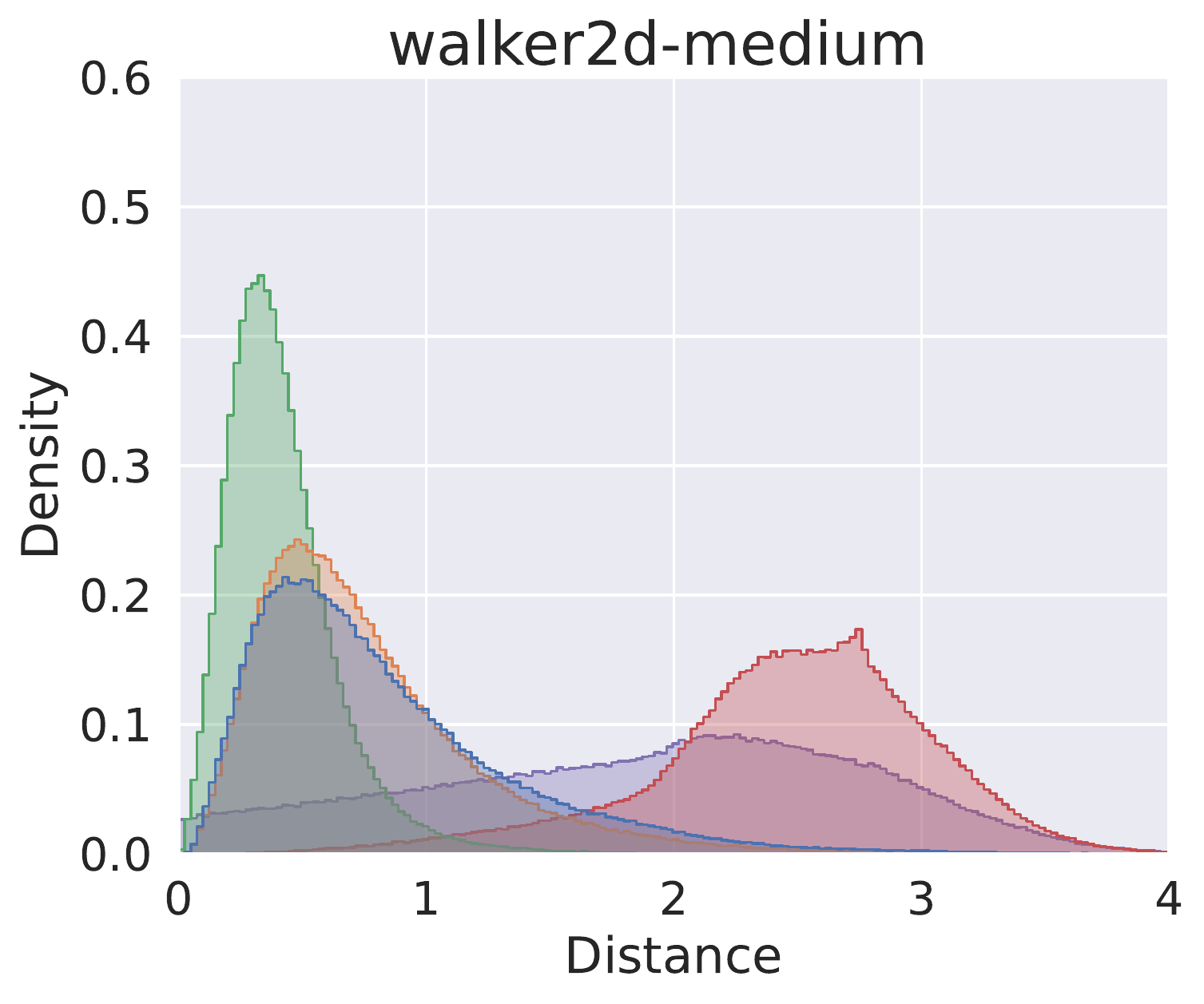}
	\caption{Histograms of the distances between the actions from each methods (EDAC, SAC-$N$, CQL, SAC-$2$, and a random policy) and the actions from the dataset. For more details of the experiment, please refer to \Cref{sec:exp_settings}.}
	\label{fig:dist}
\end{figure}

\subsection{Evaluation on D4RL Adroit tasks}
\label{subsec:adroit}

We also experiment on the more complex D4RL Adroit tasks that require controlling a 24-DoF robotic hand to perform tasks such as aligning a pen, hammering a nail, opening a door, or relocating a ball. We use two types of datasets for each environment: \textit{human}, containing 25 trajectories of human demonstrations, and \textit{cloned}, a 50-50 mixture between the demonstration data and the behavioral cloned policy on the demonstrations. Note that for the Adroit tasks, we could not reproduce the CQL results from the paper completely. For the detailed procedure of reproducing the results of CQL, please refer to \Cref{sec:cql_reproduce}.

\begin{table}[H]
	\centering
	\small
	\caption{Normalized average returns on D4RL Adroit tasks, averaged over 4 random seeds.}
	\vspace{0.5em}
	\label{tab:adroit}
	\begin{adjustbox}{max width=\columnwidth}
		\begin{tabular}{l|rrrrr|r|r}
			\toprule
			\multirow{2}{*}{\textbf{Task Name}} & \multirow{2}{*}{\textbf{BC}} & \multirow{2}{*}{\textbf{SAC}} & \multirow{2}{*}{\textbf{REM}} & \textbf{CQL} & \textbf{CQL} & \textbf{SAC-$N$} & \textbf{EDAC} \\
			& & & & \textbf{(Paper)} & \textbf{(Reproduced)} & \textbf{(Ours)} & \textbf{(Ours)} \\
			\midrule
			pen-human & 25.8$\pm$8.8 & 4.3$\pm$3.8 & 5.4$\pm$4.3 & \textbf{55.8} & 35.2$\pm$6.6 & 9.5$\pm$1.1 & 52.1$\pm$8.6 \\
			hammer-human & 3.1$\pm$3.2 & 0.2$\pm$0.0 & 0.3$\pm$0.0 & 2.1 & 0.6$\pm$0.5 & 0.3$\pm$0.0 & 0.8$\pm$0.4 \\
			door-human & 2.8$\pm$0.7 & -0.3$\pm$0.0 & -0.3$\pm$0.0 & 9.1 & 1.2$\pm$1.8 & -0.3$\pm$0.0 & \textbf{10.7$\pm$6.8} \\
			relocate-human & 0.0$\pm$0.0 & -0.3$\pm$0.0 & -0.3$\pm$0.0 & 0.35 & 0.0$\pm$0.0 & -0.1$\pm$0.1 & 0.1$\pm$0.1 \\ 
			\midrule
			pen-cloned & 38.3$\pm$11.9 & -0.8$\pm$3.2 & -1.0$\pm$0.1 & 40.3 & 27.2$\pm$11.3 & \textbf{64.1$\pm$8.7} & \textbf{68.2$\pm$7.3} \\
			hammer-cloned & 0.7$\pm$0.3 & 0.1$\pm$0.1 & -0.3$\pm$0.0 & 5.7 & 1.4$\pm$2.1 & 0.2$\pm$0.2 & 0.3$\pm$0.0 \\
			door-cloned & 0.0$\pm$0.0 & -0.3$\pm$0.1 & -0.3$\pm$0.0 & 3.5 & 2.4$\pm$2.4 & -0.3$\pm$0.0 & \textbf{9.6$\pm$8.3} \\
			relocate-cloned & 0.1$\pm$0.0 & -0.1$\pm$0.1 & -0.2$\pm$0.2 & -0.1 & 0.0$\pm$0.0 & 0.0$\pm$0.0 & 0.0$\pm$0.0 \\
			
			
			\bottomrule
		\end{tabular}
	\end{adjustbox}
\end{table}

The evaluation results are summarized in \Cref{tab:adroit}. For pen-$\ast$ tasks, where the considered algorithms achieve meaningful performance, EDAC outperforms or matches with the previous state-of-the-art. Especially, for pen-cloned, both EDAC and SAC-$N$ achieve 75\% higher score compared to CQL. Unlike the results from the Gym tasks, we find that SAC-$N$ falls behind in some datasets, for example, pen-human, which could in part due to the size of the dataset being exceptionally small (5000 transitions). However, our method with ensemble diversification successfully overcomes this difficulty.

\subsection{Computational cost comparison}
\label{subsec:compute_cost}

\begin{wraptable}{r}{0.4\linewidth}
    \centering
    \small
    \vspace{-1.4em}
    \caption{Computational costs of each method.}
    \label{tab:compute_cost}
    \begin{tabular}{ccc}
        \toprule
        & \textbf{Runtime} & \textbf{GPU Mem.} \\
        & (s/epoch) & (GB) \\
        \midrule
        \textbf{SAC} & 21.4 & 1.3 \\
        \textbf{CQL} & 38.2 & 1.4 \\
        \midrule
        \textbf{SAC-$500$} & 44.1 & 5.1 \\
        \textbf{EDAC} & 30.8 & 1.8 \\
        \bottomrule
    \end{tabular}
    \vspace{-2em}
\end{wraptable}

We compared the computational cost of our methods with vanilla SAC and CQL on hopper-medium-v2, where our methods require the largest number of Q-networks. For each method, we measure the runtime per training epoch (1000 gradient steps) along with GPU memory consumption. We run our experiments on a single machine with one RTX 3090 GPU and provide the results in \Cref{tab:compute_cost}.

As the result shows, our method EDAC runs faster than CQL with comparable memory consumption. Note that CQL is about twice as slower than vanilla SAC due to the additional computations for Q-value regularization (e.g., dual update and approximate logsumexp via sampling). Meanwhile, the inference to the Q-network ensemble in SAC-$N$ and EDAC is embarrassingly parallelizable, minimizing the runtime increase with the number of Q-networks. Also, we emphasize that our gradient diversification term in \Cref{eqn:diversity_3} has linear computational complexity, as we can reformulate the term using the sum of the gradients.

\section{Related Works}
\label{sec:rel}

\paragraph{Model-free offline RL} 
A popular approach for offline RL is to regularize the learned policy to be close to the behavior policy where the offline dataset was collected. BCQ \cite{fujimoto2019off} uses a generative model to produce actions with high similarity to the dataset and trains a restricted policy to choose the best action from the neighborhood of the generated actions. Another line of work, such as BEAR \cite{kumar2019stabilizing} or BRAC \cite{wu2019behavior}, stabilizes policy learning by penalizing the divergence from the dataset measured by KL divergence or MMD. While these policy-constraint methods demonstrate high performance on datasets from expert behavior policies, they fail to find optimal policies from datasets with suboptimal policies due to the strict policy constraints \cite{fu2020d4rl}. Also, these methods require an accurate estimation of the behavior policy, which might be difficult in complex settings with multiple behavior sources or high-dimensional environments. To address these issues, CQL \cite{kumar2020conservative} directly regularizes Q-functions by introducing a term that minimizes the Q-values for out-of-distribution actions and maximizes the Q-values for in-distribution actions. Without such explicit regularizations, REM \cite{agarwal2020optimistic} proposes to use a random convex combination of Q-network ensembles on environments with discrete action spaces \cite{bellemare2013arcade}.
 
\paragraph{Estimation bias in Q-learning}
While Q-learning is one of the most popular algorithms in reinforcement learning, it suffers from overestimation bias due to the maximum operation $\max_{\bfa' \in \mathcal{A}} Q(\bfs', \bfa')$ used during Q-function updates \cite{fujimoto2018addressing, van2016deep}.
 This overestimation bias, together with the bootstrapping, can lead to a catastrophic build-up of errors during the Q-learning process. To resolve this issue, TD3 \cite{fujimoto2018addressing} introduces a clipped version of Double Q-learning \cite{van2016deep} that takes the minimum value of two critics. Subsequently, Maxmin Q-learning \cite{lan2020maxmin} theoretically shows that the overestimation bias can be controlled by the number of ensembles in the clipped Q-learning. The overestimation problem in Q-learning can be exacerbated in the offline setting since the extrapolation error cannot be corrected with further interactions with the environment, and existing offline RL algorithms handle the bias by introducing constrained policy optimization \cite{fujimoto2019off, kumar2019stabilizing} or conservative Q-learning frameworks \cite{kumar2020conservative}.

\paragraph{Uncertainty measures in RL}
Uncertainty estimates have been widely used in RL for various purposes including exploration, Q-learning, and planning. Bootstrapped DQN \cite{osband2016deep} leverages an ensemble of Q-functions to quantify the uncertainty of the Q-value, and utilizes it for efficient exploration. Following this work, the UCB exploration algorithm \cite{chen2017ucb} constructs an upper confidence bound \cite{audibert2009exploration} of the Q-values using the empirical mean and standard deviation of Q-ensembles, which is used to promote efficient exploration by applying the principle of optimism in the face of uncertainty \cite{ciosek2019better}. \citet{osband2018randomized} proposes a randomly initialized Q-ensemble that reflects the concept of prior functions in Bayesian inference and \citet{abbas2020selective} introduces an uncertainty incorporated planning with imperfect models. The notion of uncertainty has also been considered in offline RL, mostly in the framework of model-based offline RL. Especially, MOPO \cite{yu2020mopo} and MOReL \cite{kidambi2020morel} measure the uncertainty of the model's prediction to formulate an uncertainty-penalized policy optimization problem in the offline RL setting. These methods introduce an ensemble of dynamics models for the quantification of the uncertainty, whereas our work adopts an ensemble of Q-functions for uncertainty-aware Q-learning.

\section{Conclusion}
\label{sec:conc}

We have shown that clipped Q-learning can be efficiently leveraged to construct an uncertainty-based offline RL method that outperforms previous methods on various datasets. Based on this observation, we proposed Ensemble-Diversifying Actor-Critic (EDAC) that effectively reduces the required number of ensemble networks for quantifying and penalizing the epistemic uncertainty. Our method does not require any explicit estimation of the data collecting policy or sampling from the out-of-distribution data and respects the epistemic uncertainty of each data point during penalization. EDAC, while requiring up to 90\% less number of ensemble networks compared to the vanilla Q-ensemble, exhibits state-of-the-art performance on various datasets.

\section*{Acknowledgements}

This work was supported in part by Samsung Advanced Institute of Technology, Samsung Electronics Co., Ltd., Institute of Information \& Communications Technology Planning \& Evaluation (IITP) grant funded by the Korea government (MSIT) (No. 2020-0-00882, (SW STAR LAB) Development of deployable learning intelligence via self-sustainable and trustworthy machine learning and No. 2019-0-01371, Development of brain-inspired AI with human-like intelligence), and Research Resettlement Fund for the new faculty of Seoul National University. This material is based upon work supported by the Air Force Office of Scientific Research under award number FA2386-20-1-4043.

\bibliography{main}
\bibliographystyle{plainnat}

\newpage

\appendix

\section{Ensemble gradient diversification}
\label{sec:ens_grad_div_appendix}

\subsection{Proofs}
\label{subsec:proofs}
\setcounter{lemma}{0}
\setcounter{proposition}{0}

\begin{lemma}
    The total variance of the matrix $\mathrm{Var} \left( \nabla_{\mathbf{a}} Q_{\phi_j} (\mathbf{s}, \mathbf{a}) \right)$ is equal to $1 - \| \bar{q} \|_2^2$, where $\bar{q} = \frac{1}{N}\sum_{j=1}^N \nabla_{\mathbf{a}} Q_{\phi_j} (\mathbf{s}, \mathbf{a})$.
\end{lemma}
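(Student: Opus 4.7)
The plan is to expand the trace of the sample covariance matrix directly, using the fact stated earlier in Section 4 that the input gradients $\nabla_{\mathbf{a}} Q_{\phi_j}(\mathbf{s}, \mathbf{a})$ are normalized (unit norm). Since the total variance of a covariance matrix is its trace, and the trace of $(\mathbf{v})(\mathbf{v})^\intercal$ equals $\|\mathbf{v}\|_2^2$, the key identity is
\begin{equation*}
    \mathrm{tr}\!\left(\mathrm{Var}\bigl(\nabla_{\mathbf{a}} Q_{\phi_j}(\mathbf{s}, \mathbf{a})\bigr)\right) = \frac{1}{N}\sum_{j=1}^{N} \bigl\| \nabla_{\mathbf{a}} Q_{\phi_j}(\mathbf{s}, \mathbf{a}) - \bar{q} \bigr\|_2^{\,2}.
\end{equation*}

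From here I would expand the squared norm as $\|\nabla_{\mathbf{a}} Q_{\phi_j}\|_2^{\,2} - 2\langle \nabla_{\mathbf{a}} Q_{\phi_j}, \bar{q}\rangle + \|\bar{q}\|_2^{\,2}$, then average over $j$. The first term averages to $1$ by the unit-norm assumption, the cross term collapses using the definition $\bar{q} = \tfrac{1}{N}\sum_j \nabla_{\mathbf{a}} Q_{\phi_j}$ to give $-2\|\bar{q}\|_2^{\,2}$, and the last term is $\|\bar{q}\|_2^{\,2}$. Collecting these yields $1 - \|\bar{q}\|_2^{\,2}$, as claimed.

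Since the argument is just a direct expansion, there is no real obstacle; the only thing worth flagging is the implicit convention about which definition of sample variance is used (dividing by $N$ versus $N-1$). The statement is stated with the $\tfrac{1}{N}$ convention, which is consistent with how $\bar{q}$ is defined as a simple mean, so the identity comes out clean. I would also briefly remark that this lemma is the analogue of the standard identity $\mathbb{E}\|X - \mathbb{E}X\|^2 = \mathbb{E}\|X\|^2 - \|\mathbb{E}X\|^2$, specialized to unit-norm samples.
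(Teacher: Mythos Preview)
Your proposal is correct and follows essentially the same route as the paper: both identify the total variance with the trace of the sample covariance, convert $\mathrm{tr}\bigl((q_j-\bar q)(q_j-\bar q)^\intercal\bigr)$ to $\|q_j-\bar q\|_2^2$, expand, and use the unit-norm assumption plus the definition of $\bar q$ to collapse the terms to $1-\|\bar q\|_2^2$. Your remark about the $1/N$ convention is accurate and matches the paper's usage.
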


\begin{proof}
    For simplicity, we denote $\nabla_{\mathbf{a}} Q_{\phi_j} (\mathbf{s}, \mathbf{a})$ by $q_j$ and their average by $\bar{q} = \frac{1}{N} \sum_{j} q_j$. Then, the total variance of the matrix, which is equivalent to the trace of the matrix by definition, formulates as below:
    \begin{align*}
        \mathrm{tr} \left( \mathrm{Var} \left( \nabla_{\mathbf{a}} Q_{\phi_j} (\mathbf{s}, \mathbf{a}) \right) \right) &= \mathrm{tr} \left( \frac{1}{N} \sum_{j=1}^N \left( q_j - \bar{q} \right) \left( q_j - \bar{q} \right)^\intercal \right) \\
        &= \frac{1}{N} \sum_{j=1}^N \mathrm{tr} \left( \left( q_j - \bar{q} \right) \left( q_j - \bar{q} \right)^\intercal \right) \\
        &= \frac{1}{N} \sum_{j=1}^N \mathrm{tr} \left( \left( q_j - \bar{q} \right)^\intercal \left( q_j - \bar{q} \right) \right) \tag{$\mathrm{tr}(AB)=\mathrm{tr}(BA)$} \\
        &= \frac{1}{N} \sum_{j=1}^N \left( q_j - \bar{q} \right)^\intercal \left( q_j - \bar{q} \right) \\
        &= \frac{1}{N} \sum_{j=1}^N \left( q_j^\intercal q_j -2 q_j^\intercal \bar{q} + \bar{q}^\intercal \bar{q} \right) \\
        &= 1 - 2 \left(\frac{1}{N}\sum_{j=1}^N q_j\right)^\intercal \bar{q} + \bar{q}^\intercal \bar{q} \tag{$\| q_j \|_2 = 1$}\\
        &= 1 - \bar{q}^\intercal \bar{q}  \\
        &= 1- \| \bar{q} \|_2^2.
    \end{align*}
\end{proof}

\begin{proposition}
    Suppose $Q_{\phi_j}(\mathbf{s}, \mathbf{a}) = Q(\mathbf{s}, \mathbf{a})$ and $Q_{\phi_j}(\mathbf{s}, \cdot)$ is locally linear in the neighborhood of $\mathbf{a}$ for all $j \in [N]$. Let $\lambda_{\mathrm{min}}$ and $\mathbf{w}_{\min}$ be the smallest eigenvalue and the corresponding normalized eigenvector of the matrix $\mathrm{Var} \left( \nabla_{\mathbf{a}} Q_{\phi_j} (\mathbf{s}, \mathbf{a}) \right)$ and $\epsilon > 0$ be the value such that $\min_{i \neq j} \left\langle \nabla_{\mathbf{a}} Q_{\phi_i}(\mathbf{s}, \mathbf{a}), \nabla_{\mathbf{a}} Q_{\phi_j}(\mathbf{s}, \mathbf{a}) \right\rangle = 1 - \epsilon$. Then, the variance of the Q-values for an OOD action in the neighborhood along the direction of $\mathbf{w}_{\mathrm{min}}$ is upper-bounded as follows:
    \begin{align*}
        \mathrm{Var} \left( Q_{\phi_j} (\mathbf{s}, \mathbf{a} + k\mathbf{w}_{\mathrm{min}}) \right) \leq \frac{1}{|\mathcal{A}|}\frac{N-1}{N} k^2 \epsilon,
    \end{align*}
where $|\mathcal{A}|$ is the action space dimension.
\end{proposition}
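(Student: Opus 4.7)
The plan is to chain three observations: the local linearity reduces the variance of interest to a quadratic form in $\mathbf{w}_{\min}$, an averaging argument bounds the smallest eigenvalue by the total variance divided by $|\mathcal{A}|$, and the assumed pairwise alignment lower-bounds $\|\bar q\|_2^2$, which via \Cref{lemma:trace} upper-bounds the total variance. Throughout I write $q_j := \nabla_{\mathbf{a}} Q_{\phi_j}(\mathbf{s}, \mathbf{a})$ and $\bar q := \tfrac{1}{N}\sum_j q_j$, and use the standing normalization $\|q_j\|_2 = 1$ mentioned in the paragraph preceding the proposition.

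First I would invoke the hypotheses $Q_{\phi_j}(\mathbf{s}, \mathbf{a}) = Q(\mathbf{s}, \mathbf{a})$ and local linearity of $Q_{\phi_j}(\mathbf{s}, \cdot)$ near $\mathbf{a}$ to make the first-order Taylor identity derived in the main text exact. Specializing that identity to the unit vector $\mathbf{w}_{\min}$ and recalling that it is an eigenvector of $\mathrm{Var}(q_j)$ with eigenvalue $\lambda_{\min}$ collapses the quadratic form, yielding $\mathrm{Var}\bigl(Q_{\phi_j}(\mathbf{s}, \mathbf{a} + k\mathbf{w}_{\min})\bigr) = k^2 \lambda_{\min}$. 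The task therefore reduces to showing $\lambda_{\min} \leq \tfrac{1}{|\mathcal{A}|}\tfrac{N-1}{N}\epsilon$.

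Second, I would use that $\mathrm{Var}(q_j)$ is a $|\mathcal{A}|\times|\mathcal{A}|$ positive semidefinite matrix, so its smallest eigenvalue is at most the arithmetic mean of all its eigenvalues, giving $\lambda_{\min} \leq \tfrac{1}{|\mathcal{A}|}\,\mathrm{tr}(\mathrm{Var}(q_j))$. Applying \Cref{lemma:trace} rewrites this as $\lambda_{\min} \leq \tfrac{1}{|\mathcal{A}|}\bigl(1 - \|\bar q\|_2^2\bigr)$.

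The only quantitative step, and the main obstacle, is translating the inner-product hypothesis $\min_{i\neq j}\langle q_i, q_j\rangle = 1 - \epsilon$ into a lower bound on $\|\bar q\|_2^2$. I would expand
\[
\|\bar q\|_2^2 \;=\; \frac{1}{N^2}\sum_{i,j=1}^{N}\langle q_i, q_j\rangle,
\]
separate the $N$ diagonal terms (which contribute $N$ by normalization) from the $N(N-1)$ off-diagonal terms (each at least $1-\epsilon$ by hypothesis), and simplify to obtain $\|\bar q\|_2^2 \geq 1 - \tfrac{N-1}{N}\epsilon$, equivalently $1 - \|\bar q\|_2^2 \leq \tfrac{N-1}{N}\epsilon$. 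Chaining this with the previous two displays and multiplying by $k^2$ yields the claimed bound $\mathrm{Var}\bigl(Q_{\phi_j}(\mathbf{s}, \mathbf{a} + k\mathbf{w}_{\min})\bigr) \leq \tfrac{1}{|\mathcal{A}|}\tfrac{N-1}{N}k^2\epsilon$.
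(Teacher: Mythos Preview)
Your proposal is correct and follows essentially the same argument as the paper: reduce the variance to $k^2\lambda_{\min}$ via local linearity, bound $\lambda_{\min}$ by $\tfrac{1}{|\mathcal{A}|}\,\mathrm{tr}(\mathrm{Var}(q_j))$, apply \Cref{lemma:trace} to rewrite the trace as $1-\|\bar q\|_2^2$, and expand $\|\bar q\|_2^2$ into diagonal and off-diagonal inner products to get the $(N-1)\epsilon/N$ bound. The only difference is the order in which you present the steps, which is immaterial.
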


\begin{proof}
    We first prove that the smallest eigenvalue $\lambda_{\mathrm{min}}$ of $\mathrm{Var} \left( \nabla_{\mathbf{a}} Q_{\phi_j} (\mathbf{s}, \mathbf{a}) \right)$ is upper-bounded by some constant multiple of $\epsilon$.  For simplicity, we denote $\nabla_{\mathbf{a}} Q_{\phi_j} (\mathbf{s}, \mathbf{a})$ by $q_j$ and their average by $\bar{q} = \frac{1}{N} \sum_{j} q_j$. We first compute the norm of the average of the gradients, which can be expressed by
    \begin{align*}
        \| \bar{q} \|_2^2 &= \langle \bar{q}, \bar{q} \rangle \\
        &= \left\langle \frac{1}{N} \sum_{i=1}^N q_i, \frac{1}{N} \sum_{j=1}^N q_j\right\rangle \\
        &= \frac{1}{N^2} \sum_{1 \leq i, j \leq N} \langle q_i, q_j \rangle \\
        &= \frac{1}{N^2} \left( \sum_{j=1}^N \langle q_j, q_j \rangle + \sum_{1 \leq i \neq j \leq N} \langle q_i, q_j \rangle \right) \\
        &\geq \frac{1}{N^2} \left(N + N(N-1)(1 - \epsilon)\right) \\
        &= 1 - \frac{(N-1)}{N} \epsilon.
    \end{align*}
    By Lemma 1, the total variance of the matrix is less or equal to $\frac{N-1}{N}\epsilon$. Using the fact that the total variance is equivalent to the sum of the eigenvalues and the eigenvalues of a variance matrix is non-negative, we have
    \begin{align}
        \label{eqn:eigenvalue}
        \lambda_{\mathrm{min}} & \leq \frac{1}{|\mathcal{A}|} \sum_{j=1}^{|\mathcal{A}|} \lambda_j \nonumber \\ 
        &= \frac{1}{|\mathcal{A}|} \mathrm{tr} \left( \mathrm{Var} \left( \nabla_{\mathbf{a}} Q_{\phi_j} (\mathbf{s}, \mathbf{a}) \right) \right) \nonumber \\
        & \leq \frac{1}{|\mathcal{A}|}\frac{N-1}{N}\epsilon,
    \end{align}
    where $\lambda_1, \ldots, \lambda_{|\mathcal{A}|}$ are the eigenvalues of $\mathrm{Var} \left( \nabla_{\mathbf{a}} Q_{\phi_j} (\mathbf{s}, \mathbf{a}) \right)$.
    
    Note that, using the fact that the Q-values coincide at the action $\mathbf{a}$ and the local linearity of the Q-functions, we have derived
    \begin{align}
        \label{eqn:approx}
        \mathrm{Var}(Q_{\phi_j}(\mathbf{s}, \mathbf{a} + k \mathbf{w})) = k^2 \mathbf{w}^\intercal \mathrm{Var} \left( \nabla_{\mathbf{a}} Q_{\phi_j} (\mathbf{s}, \mathbf{a}) \right) \mathbf{w}.
    \end{align}
    Plugging $\mathbf{w}=\mathbf{w}_{\text{min}}$ in \Cref{eqn:approx} and using \Cref{eqn:eigenvalue}, we have
    \begin{align*}
        \mathrm{Var}(Q_{\phi_j}(\mathbf{s}, \mathbf{a} + k \mathbf{w}_{\text{min}})) &= k^2 \mathbf{w}_{\text{min}}^\intercal \mathrm{Var} \left( \nabla_{\mathbf{a}} Q_{\phi_j} (\mathbf{s}, \mathbf{a}) \right) \mathbf{w}_{\text{min}} \\
        &= k^2 \lambda_{\text{min}} \\
        &\leq \frac{1}{|\mathcal{A}|}\frac{N-1}{N} k^2 \epsilon.
    \end{align*}.
\end{proof}

\subsection{Relationship between maximizing the total variance and maximizing the smallest eigenvalue}
\label{subsec:relationship}

As we have shown in \Cref{sec:ens_grad_div}, maximizing the total variance of the matrix $\mathrm{Var} \left( \nabla_{\mathbf{a}} Q_{\phi_i}(\mathbf{s}, \mathbf{a})\right)$ is equivalent to minimizing the cosine similarity of all distinct pairs of the gradients $\nabla_{\mathbf{a}} Q_{\phi_i}(\mathbf{s}, \mathbf{a})$, which makes the gradients uniformly distributed on the unit sphere $S^{|\mathcal{A}|-1}$. Therefore, if the trace is sufficiently maximized, then we can see $\mathrm{Var} \left( \nabla_{\mathbf{a}} Q_{\phi_i}(\mathbf{s}, \mathbf{a}) \right)$ as a sample variance matrix of a uniform spherical distribution. It can be easily proved that the variance matrix of a uniform distribution on is $\frac{1}{|\mathcal{A}|} I$, whose all eigenvalues are equal to $\frac{1}{|\mathcal{A}|}$,  by \Cref{prop:cov}.

\begin{proposition}
    \label{prop:cov}
    The variance matrix of the uniform spherical distribution $X \sim \mathcal{U}(S^{n-1})$ is $\frac{1}{n} I$.
\end{proposition}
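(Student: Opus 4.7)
The plan is to exploit the rotational symmetry of the uniform distribution on the unit sphere to reduce the computation of the variance matrix to determining a single scalar, which I then fix by a one-line trace computation. The key observation is that $\mathcal{U}(S^{n-1})$ is invariant under the full orthogonal group $O(n)$, and the only second-moment matrix compatible with that invariance is a multiple of the identity.

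First I would note that the mean of $X$ is zero: since $X$ and $-X$ have the same distribution on $S^{n-1}$, linearity of expectation forces $\mathbb{E}[X] = 0$. Hence the variance matrix coincides with the raw second-moment matrix $\Sigma := \mathbb{E}[XX^\intercal]$, and it suffices to show $\Sigma = \frac{1}{n} I$.

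Next I would invoke rotational invariance. For every $U \in O(n)$, the pushforward of $\mathcal{U}(S^{n-1})$ under $X \mapsto UX$ is again $\mathcal{U}(S^{n-1})$, so
\begin{align*}
\Sigma \;=\; \mathbb{E}\bigl[(UX)(UX)^\intercal\bigr] \;=\; U\,\Sigma\, U^\intercal \quad \text{for every } U \in O(n).
\end{align*}
A symmetric matrix that commutes with every orthogonal conjugation must be a scalar multiple of $I$: a clean way to see this is to plug in $U$ equal to a diagonal sign-flip to force all off-diagonal entries to vanish, and then plug in coordinate-permutation matrices to equalize the diagonal entries. Thus $\Sigma = cI$ for some scalar $c \in \mathbb{R}$.

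Finally I would pin down $c$ by taking the trace:
\begin{align*}
cn \;=\; \mathrm{tr}(\Sigma) \;=\; \mathbb{E}\bigl[\mathrm{tr}(XX^\intercal)\bigr] \;=\; \mathbb{E}\bigl[X^\intercal X\bigr] \;=\; \mathbb{E}\bigl[\|X\|_2^2\bigr] \;=\; 1,
\end{align*}
since $X$ lies on the unit sphere almost surely. Therefore $c = 1/n$ and $\Sigma = \tfrac{1}{n} I$, as claimed. I do not anticipate any real obstacle; the only mildly delicate step is the symmetry argument that forces $\Sigma$ to be scalar, and that can be handled cleanly by the sign-flip-plus-permutation trick above (or, more abstractly, by Schur's lemma applied to the standard representation of $O(n)$, which is irreducible over $\mathbb{R}$).
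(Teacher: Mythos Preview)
Your proof is correct and essentially the same as the paper's: both use coordinate sign-flips to kill the mean and off-diagonal entries, permutation symmetry to equalize the diagonal entries, and the unit-norm constraint $\mathbb{E}[\|X\|_2^2]=1$ to fix the common diagonal value at $1/n$. Your framing via full $O(n)$-invariance and the trace is slightly more abstract, but the concrete steps you spell out (sign-flip plus permutation) are exactly the paper's argument.
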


\begin{proof}
    Let $X = (X_1, \ldots, X_n)$. Then $X_{-i}= (X_1, \ldots, -X_i, \ldots, X_n)$ is also from the uniform spherical distribution. Therefore, we have $\mathbb{E}[X_i] = \mathbb{E}[-X_i] = 0$ and $\mathbb{E}[X_i X_j] = \mathbb{E}[-X_i X_j] = 0, ~ \forall i \neq j$. For the diagonal entries of the variance matrix, we have $\mathbb{E}[\sum_{i=1}^n X_i^2] = \sum_{i=1}^n \mathbb{E}[X_i^2] = 1$ by the definition of the spherical distribution and $\mathbb{E}[X_i^2] = \mathbb{E}[X_j^2]$ by the symmetry of the distribution. Therefore, we have $\mathbb{E}[X_i^2] = \frac{1}{n}$ and $\mathrm{Var}(X) = \frac{1}{n} I$.
\end{proof}

Note that the smallest eigenvalue of $\mathrm{Var} \left( \nabla_{\mathbf{a}} Q_{\phi_i}(\mathbf{s}, \mathbf{a}) \right)$ is less or equal to $\frac{1}{|A|}$, since the total variance is upper-bounded by 1 due to Lemma 1. Therefore, as the number of Q-ensembles goes to infinity, $\mathrm{Var} \left( \nabla_{\mathbf{a}} Q_{\phi_i}(\mathbf{s}, \mathbf{a}) \right)$ converges to $\frac{1}{|\mathcal{A}|} I$, attaining the maximum value for the smallest eigenvalue.

\section{Implementation details}
\label{sec:imp_details}

\paragraph{SAC} We use the SAC implementation from rlkit\footnote{\url{https://github.com/vitchyr/rlkit}}. We use its default parameters except for increasing the number of layers for both the policy network and the Q-function networks from 2 to 3, following the protocol of CQL.

\paragraph{REM} We implement a continuous control version of REM on top of SAC by modifying the Bellman residual term to
\small
\begin{align*}
	\min_{\phi}~ & \mathbb{E}_{\mathbf{s},\mathbf{a},\mathbf{s'} \sim \mathcal{D}, \xi \sim \text{P}_\Delta} \left[ \left( \sum_{j=1}^N \xi_j Q_{\phi_j}(\mathbf{s},\mathbf{a}) -  \left( r(\mathbf{s}, \mathbf{a}) + \gamma\ \mathbb{E}_{\mathbf{\mathbf{a'} \sim \pi_\theta (\cdot \mid \mathbf{s'})}} \left[ \sum_{j=1}^N  \xi_j Q_{\phi_j'}\left(\mathbf{s'}, \mathbf{a'} \right) \right] \right) \right)^2 \right], \\ 
\end{align*}
\normalsize
where $\text{P}_\Delta$ represents a probability distribution over the standard ($N-1$)-simplex $\Delta^{N-1}= \{\xi \in \mathbb{R}^N : \xi_1 + \xi_2 + \cdots + \xi_N = 1, ~ \xi_n \geq 0, ~ n=1, \ldots, N\}$. Following the original REM paper, we use a simple probability distribution: $\xi_n = \xi_n' / \sum_k \xi_k'$, where $\xi_k' \sim U(0, 1)$ for $k=1, \ldots, N$. For a fair comparison with our ensemble algorithms, we sweep the ensemble size $N$ within $\{2, 5, 10, 20, 50, 100, 200, 500, 1000\}$ and report the best number.

\paragraph{CQL} We use the official implementation by the authors\footnote{ \url{ https://github.com/aviralkumar2907/CQL}}. For MuJoCo Gym tasks, the recommended hyperparameters in the codebase differ from the original paper due to the updates in the D4RL datasets. We tried both versions of hyperparameter settings and found the codebase version outperforms the paper version while matching the numbers in the paper reasonably well. Therefore, we follow the guidelines from the official code and use the fixed $\alpha$ version, searching for the parameters within $\alpha\in\{5, 10\}$ and policy learning rate $\in\{1e-4, 3e-4\}$. We chose $\alpha=10.0$ with policy learning rate $=1e-4$ as the default as it gives the best results in most of the datasets. However, we use the dual gradient descent version with $\tau=10.0$ and policy learning rate$=1e-4$ on some datasets, such as halfcheetah-random, since the fixed $\alpha$ version could not reproduce the results from the paper on those datasets. For the Adroit tasks, the codebase does not provide separate guidelines, and we use the hyperparameters listed in the paper. 

\paragraph{SAC-$N$ (Ours)} We keep the default setting from the SAC experiments other than the ensemble size $N$. On halfcheetah and walker2d environments, we tune $N$ in the range of $\{5, 10, 20\}$ except for walker2d-expert, which requires up to $N\!=\!100$. For hopper, we tune within $N\in\{100, 200, 500, 1000\}$. The hyperparameters selected are listed in \Cref{tab:gym_params}. As we noted in \Cref{fig:n_hist}, some datasets can be dealt with less $N$ (\eg, $\ast$-replay). However, we tried to keep the hyperparameters within an environment consistent in order to reduce hyperparameter sensitivity. For Adroit tasks, we sweep $N$ in the range of $\{20, 50, 100, 200\}$ and report the selected $N$ in \Cref{tab:adroit_params}.

\paragraph{EDAC (Ours)} For Mujoco Gym tasks, we tune the ensemble size $N$ within the range of $\{10, 20, 50\}$ and the weight of the ensemble gradient diversity term $\eta$ within $\{0.0, 1.0, 5.0\}$. Note that we use the same $N$ on each environment. For Adroit tasks, we sweep the parameters on $N\in\{20, 50, 100\}$ and $\eta\in\{100, 200, 500, 1000\}$ except for pen-cloned, which uses $\eta=10.0$. While we can also achieve competitive performance on pen-cloned with larger $\eta$, we found lower $\eta$ helps to mitigate the performance degradation on further training steps. The selected hyperparameters on each environment are listed in \Cref{tab:gym_params} and \Cref{tab:adroit_params}, respectively.

\begin{table}[H]
	\centering
	\small
	\caption{Hyperparameters used in the D4RL MuJoCo Gym experiments.}
	\vspace{0.2em}
	\label{tab:gym_params}
	\begin{adjustbox}{max width=\columnwidth}
		\begin{tabular}{l|c|c}
			\toprule
			\textbf{Task Name} & \textbf{SAC-$N$} ($N$) & \textbf{EDAC} ($N$, $\eta$) \\
			\midrule
			halfcheetah-random & 10 & 10, 0.0 \\
			halfcheetah-medium & 10 & 10, 1.0 \\
			halfcheetah-expert & 10 & 10, 1.0 \\
			halfcheetah-medium-expert & 10 & 10, 5.0 \\
			halfcheetah-medium-replay & 10 & 10, 1.0 \\
			halfcheetah-full-replay & 10 & 10, 1.0 \\
			\midrule
			hopper-random & 500 & 50, 0.0 \\
			hopper-medium & 500 & 50, 1.0 \\
			hopper-expert & 500 & 50, 1.0 \\
			hopper-medium-expert & 200 & 50, 1.0 \\
			hopper-medium-replay & 200 & 50, 1.0 \\
			hopper-full-replay & 200 & 50, 1.0 \\
			\midrule
			walker2d-random & 20 & 10, 1.0 \\
			walker2d-medium & 20 & 10, 1.0\\
			walker2d-expert & 100 & 10, 5.0 \\
			walker2d-medium-expert & 20 & 10, 5.0 \\
			walker2d-medium-replay & 20 & 10, 1.0 \\
			walker2d-full-replay & 20 & 10, 1.0 \\
			\bottomrule
		\end{tabular}
	\end{adjustbox}
\end{table}

\begin{table}[H]
	\centering
	\small
	\caption{Hyperparameters used in the D4RL Adroit experiments.}
	\vspace{0.2em}
	\label{tab:adroit_params}
	\begin{adjustbox}{max width=\columnwidth}
		\begin{tabular}{l|c|c}
			\toprule
			\textbf{Task Name} & \textbf{SAC-$N$} ($N$) & \textbf{EDAC} ($N$, $\eta$) \\
			\midrule
			pen-human & 100 & 20, 1000.0 \\
			pen-cloned & 100 & 20, 10.0 \\
			\midrule
			hammer-human & 100 & 50, 200.0 \\
			hammer-cloned & 100 & 50, 200.0 \\
			\midrule
			door-human & 100 & 50, 200.0 \\
			door-cloned & 100 & 50, 200.0 \\
			\midrule
			relocate-human & 100 & 50, 200.0 \\
			relocate-cloned & 100 & 50, 200.0 \\
			\bottomrule
		\end{tabular}
	\end{adjustbox}
\end{table}

\section{Experimental settings}
\label{sec:exp_settings}

\paragraph{MuJoCo Gym} We use the v2 version of each dataset (\eg, halfcheetah-random-v2) which fixes some of the bugs from the previous versions. We run each algorithm for 3 million training steps and report the normalized average return of each policy. While the CQL paper originally used 1 million steps, we found increasing this to 3 million helps the algorithms to converge on more complex datasets such as $\ast$-medium-expert.

\paragraph{Adroit} We use the v1 version of each dataset. On these datasets, we adopt max Q backup from CQL and normalize the rewards for training stability. As we will discuss in \Cref{sec:cql_reproduce}, the performance of the baseline algorithm CQL degrades after some steps of training. Therefore, for a fair comparison, we run each algorithm for 200,000 steps and report the normalized average return.

\paragraph{Minimum required Q-ensembles (\Cref{fig:n_hist})} To check the minimum required number of Q-ensembles for each dataset, we sweep $N$ within the range of $\{2, 3, 5, 10, 20, 50, 100, 200, 500, 1000\}$ and report the minimum $N$ that achieves the performance similar to \Cref{tab:gym}. We find EDAC successes to reduce the required $N$ significantly when the original requirement is high (\eg, hopper, walker2d-expert).

\paragraph{Action distance histograms (\Cref{fig:dist})} To draw the histogram, we sample 500,000 random $(\bfs, \bfa)$ pairs from each dataset and measure the $\ell_2$ distance between the action sampled from each policy after full training and the dataset action. 

\section{Reproducing CQL in Adroit}
\label{sec:cql_reproduce}

Since the pen-$\ast$ tasks are where the considered algorithms show meaningful performance, we focused on reproducing the reported results for those tasks. After running CQL with the parameters given in the original paper, we found that the performance of CQL degrades after about 200,000 steps, as shown in \Cref{fig:cql_adroit}. While we are not sure of the cause of this performance gap, it could be due to the difference in the \texttt{min\_q\_weight} parameter setting, which was not specified in the original paper, or a minor modification we applied to the code to fix the backpropagation issue\footnote{\url{https://github.com/aviralkumar2907/CQL/issues/5}}. Meanwhile, for a fair comparison, on Adroit we chose to use early-stopping and train each algorithm for 200,000 steps. Also, we include the reported CQL numbers for all experiments.

\begin{figure}[ht]
    \centering
	\begin{subfigure}{.45\textwidth}
		\centering
		\includegraphics[width=\linewidth]{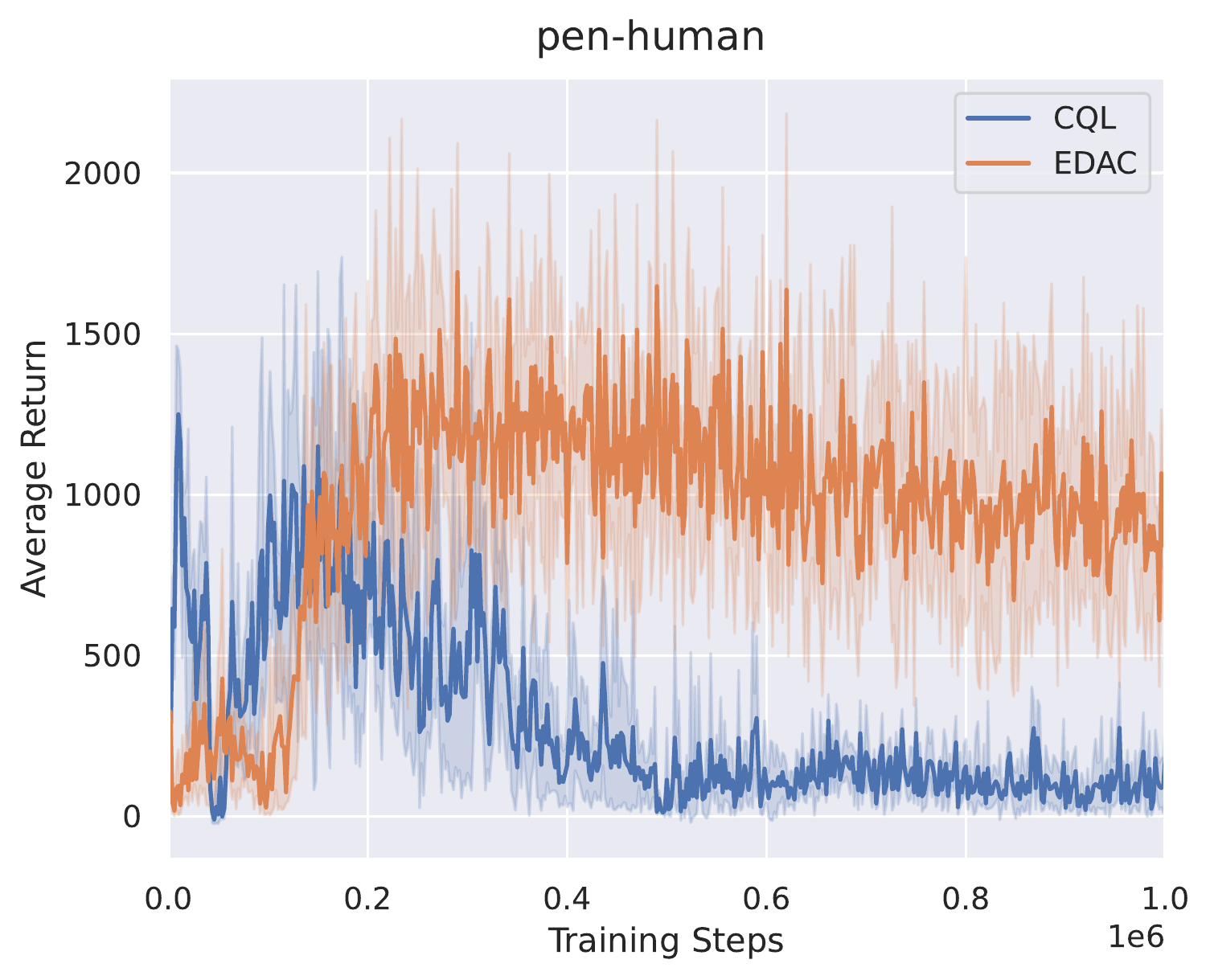}
	\end{subfigure}
	\hspace{0.2cm}
	\begin{subfigure}{.45\textwidth}
		\centering
		\includegraphics[width=\linewidth]{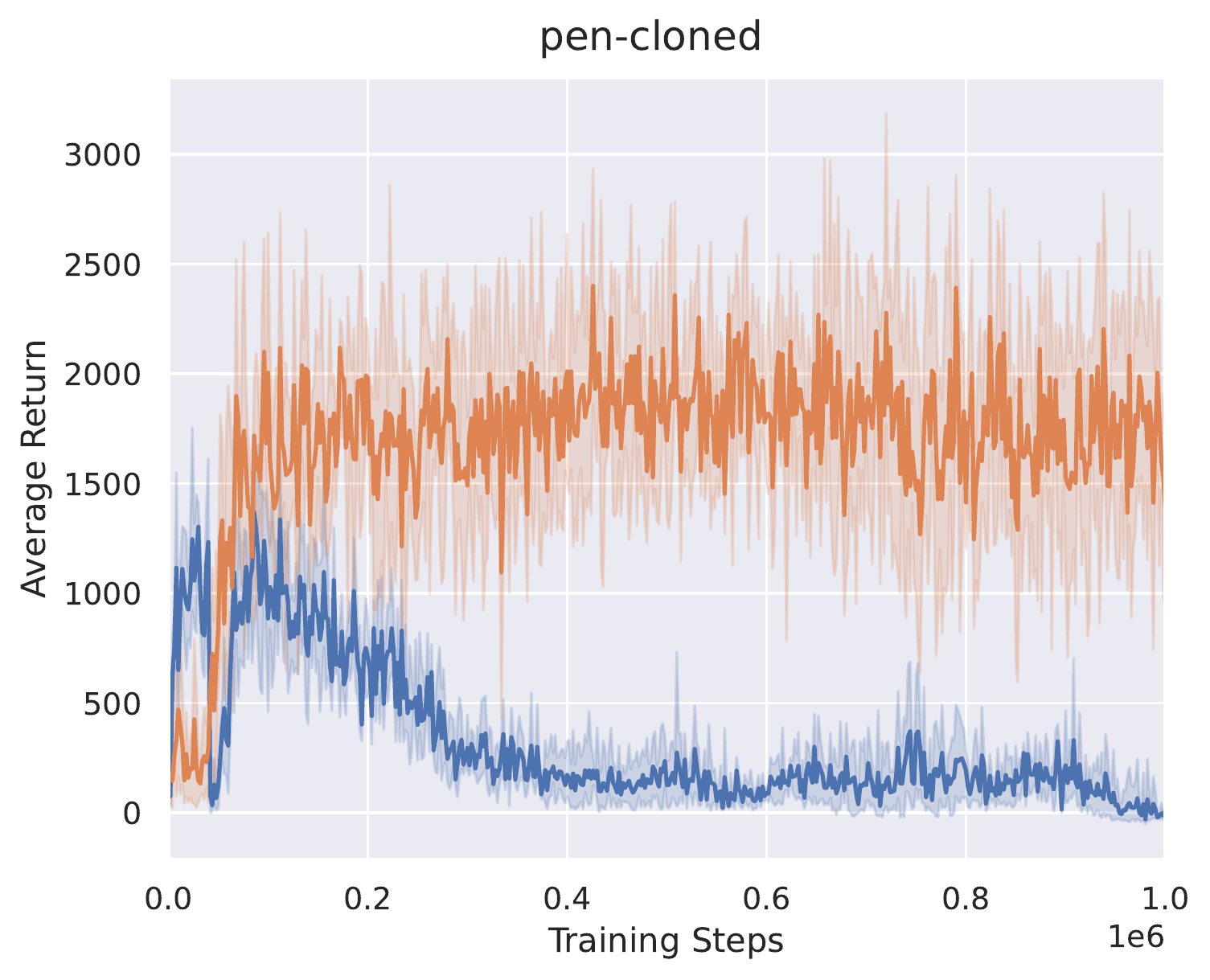}
	\end{subfigure}
	\caption{Performance of EDAC and CQL on pen-$\ast$ datasets. `Average Return' denotes the undiscounted return of each policy on evaluation. Results averaged over 4 seeds.}
	\label{fig:cql_adroit}
\end{figure}

\section{Comparison with more baselines}
\label{sec:more_baselines}

We additionally compared our methods with more baselines on D4RL Gym datasets. First, we add comparisons with some of the well-known offline RL methods, BCQ \cite{fujimoto2019off}, BEAR \cite{kumar2019stabilizing}, BRAC \cite{wu2019behavior}, and MOReL \cite{kidambi2020morel}. Also, we include the results of UWAC \cite{wu2021uwac}, a concurrent work that also utilizes Q-value uncertainty. We reproduced all the methods by following the hyperparameter search procedure listed in each paper and selected the best results. We report the normalized average return results in \Cref{tab:gym_extended}.

\begin{table}[H]
	\centering
	\small
	\caption{Extended results of normalized average returns on D4RL Gym tasks, averaged over 4 random seeds. CQL (Paper) denotes the results reported in the original paper.}
	\vspace{0.2em}
	\label{tab:gym_extended}
	\begin{adjustbox}{max width=\columnwidth}
		\begin{tabular}{l|rrrrrrrrrr|r|r}
			\toprule
			\multirow{2}{*}{\textbf{Task Name}} & \multirow{2}{*}{\textbf{BC}} & \multirow{2}{*}{\textbf{SAC}} & \multirow{2}{*}{\textbf{REM}} & \multirow{2}{*}{\textbf{BCQ}} & \multirow{2}{*}{\textbf{BEAR}} & \multirow{2}{*}{\textbf{BRAC}} & \multirow{2}{*}{\textbf{MOReL}} & \multirow{2}{*}{\textbf{UWAC}} & \textbf{CQL} & \textbf{CQL} & \textbf{SAC-$N$} & \textbf{EDAC} \\
			& & & & & & & & & \textbf{(Paper)} & \textbf{(Reproduced)} & \textbf{(Ours)} & \textbf{(Ours)} \\
			\midrule
			halfcheetah-random & 2.2$\pm$0.0 & 29.7$\pm$1.4 & -0.8$\pm$1.1 & 2.2$\pm$0.0 & 12.6$\pm$1.0 & 24.3$\pm$0.7 & \textbf{38.9$\pm$1.8} & 2.3$\pm$0.0 & 35.4 & 31.3$\pm$3.5 & 28.0$\pm$0.9 & 28.4$\pm$1.0 \\
			halfcheetah-medium & 43.2$\pm$0.6 & 55.2$\pm$27.8 & -0.8$\pm$1.3 & 46.6$\pm$0.4 & 42.8$\pm$0.1 & 51.9$\pm$0.3 & 60.7$\pm$4.4 & 43.7$\pm$0.4 & 44.4 & 46.9$\pm$0.4 & \textbf{67.5$\pm$1.2} & \textbf{65.9$\pm$0.6} \\
			halfcheetah-expert & 91.8$\pm$1.5 & -0.8$\pm$1.8 & 4.1$\pm$5.7 & 89.9$\pm$9.6 & 92.6$\pm$0.6 & 39.0$\pm$13.8 & 8.4$\pm$11.8 & 94.7$\pm$1.1 &  104.8 & 97.3$\pm$1.1 & \textbf{105.2$\pm$2.6} & \textbf{106.8$\pm$3.4} \\
			halfcheetah-medium-expert & 44.0$\pm$1.6 & 28.4$\pm$19.4 & 0.7$\pm$3.7 & 95.4$\pm$2.0 & 45.7$\pm$4.2 & 52.3$\pm$0.1 & 80.4$\pm$11.7 & 47.0$\pm$6.0 & 62.4 & 95.0$\pm$1.4 & \textbf{107.1$\pm$2.0} & \textbf{106.3$\pm$1.9} \\
			halfcheetah-medium-replay & 37.6$\pm$2.1 & 0.8$\pm$1.0 & 6.6$\pm$11.0 & 42.2$\pm$0.9 & 39.4$\pm$0.8 & 48.6$\pm$0.4 & 44.5$\pm$5.6 & 38.9$\pm$1.1 & 46.2 & 45.3$\pm$0.3 & \textbf{63.9$\pm$0.8} & \textbf{61.3$\pm$1.9}  \\
			halfcheetah-full-replay & 62.9$\pm$0.8 & \textbf{86.8$\pm$1.0} & 27.8$\pm$35.4 & 69.5$\pm$4.0 & 60.1$\pm$3.2 & 78.0$\pm$0.7 & 70.1$\pm$5.1 & 65.1$\pm$0.5 & - & 76.9$\pm$0.9 & 84.5$\pm$1.2 & 84.6$\pm$0.9 \\
			\midrule
			hopper-random & 3.7$\pm$0.6 & 9.9$\pm$1.5 & 3.4$\pm$2.2 & 7.8$\pm$0.6 & 3.6$\pm$3.6 & 8.1$\pm$0.6 & \textbf{38.1$\pm$10.1} & 2.6$\pm$0.3 & 10.8 & 5.3$\pm$0.6 & 31.3$\pm$0.0 & 25.3$\pm$10.4 \\
			hopper-medium & 54.1$\pm$3.8 & 0.8$\pm$0.0 & 0.7$\pm$0.0 & 59.4$\pm$8.3 & 55.3$\pm$3.2 & 77.8$\pm$6.1 & 84.0$\pm$17.0 & 52.6$\pm$4.0 & 86.6 & 61.9$\pm$6.4 & \textbf{100.3$\pm$0.3} & \textbf{101.6$\pm$0.6}  \\
			hopper-expert & 107.7$\pm$9.7 & 0.7$\pm$0.0 & 0.8$\pm$0.0 & 109$\pm$4.0 & 39.4$\pm$20.5 & 78.1$\pm$52.3 & 80.4$\pm$34.9 & \textbf{111.0$\pm$0.8} & 109.9 & 106.5$\pm$9.1 & 110.3$\pm$0.3 & 110.1$\pm$0.1 \\
			hopper-medium-expert & 53.9$\pm$4.7 & 0.7$\pm$0.0 & 0.8$\pm$0.0 & 106.9$\pm$5.0 & 66.2$\pm$8.5 & 81.3$\pm$8.0 & 105.6$\pm$8.2 & 54.8$\pm$3.2 & \textbf{111.0} & 96.9$\pm$15.1 & 110.1$\pm$0.3 & 110.7$\pm$0.1 \\
			hopper-medium-replay & 16.6$\pm$4.8 & 7.4$\pm$0.5 & 27.5$\pm$15.2 & 60.9$\pm$14.7 & 57.7$\pm$16.5 & 62.7$\pm$30.4 & 81.8$\pm$17.0 & 31.1$\pm$14.8 & 48.6 & 86.3$\pm$7.3 & \textbf{101.8$\pm$0.5} & \textbf{101.0$\pm$0.5} \\
			hopper-full-replay & 19.9$\pm$12.9 & 41.1$\pm$17.9 & 19.7$\pm$24.6 & 46.6$\pm$13.0 & 54.0$\pm$24.0 & \textbf{107.4$\pm$0.5} & 94.4$\pm$20.5 & 21.9$\pm$8.4 & - & 101.9$\pm$0.6 & 102.9$\pm$0.3 & 105.4$\pm$0.7 \\
			\midrule
			walker2d-random & 1.3$\pm$0.1 & 0.9$\pm$0.8 & 6.9$\pm$8.3 & 4.9$\pm$0.1 & 4.3$\pm$1.2 & 1.3$\pm$1.4 & 16.0$\pm$7.7 & 1.5$\pm$0.3 & 7.0 & 5.4$\pm$1.7 & \textbf{21.7$\pm$0.0} & \textbf{16.6$\pm$7.0} \\
			walker2d-medium & 70.9$\pm$11.0 & -0.3$\pm$0.2 & 0.2$\pm$0.7 & 71.8$\pm$7.2 & 59.8$\pm$40.0 & 59.7$\pm$39.9 & 72.8$\pm$11.9 & 66.0$\pm$9.0 & 74.5 & 79.5$\pm$3.2 & \textbf{87.9$\pm$0.2} & \textbf{92.5$\pm$0.8} \\
			walker2d-expert & 108.7$\pm$0.2 & 0.7$\pm$0.3 & 1.0$\pm$2.3 & 106.3$\pm$5.0 & 110.1$\pm$0.6 & 55.2$\pm$62.2 & 62.6$\pm$29.9 & 108.4$\pm$0.5 & \textbf{121.6} & 109.3$\pm$0.1 & 107.4$\pm$2.4 & 115.1$\pm$1.9\\
			walker2d-medium-expert & 90.1$\pm$13.2 & 1.9$\pm$3.9 & -0.1$\pm$0.0 & 107.7$\pm$3.8 & 107.0$\pm$2.9 & 9.3$\pm$18.9 & 107.5$\pm$5.6 & 85.7$\pm$14.0 & 98.7 & 109.1$\pm$0.2 & \textbf{116.7$\pm$0.4} & \textbf{114.7$\pm$0.9} \\
			walker2d-medium-replay & 20.3$\pm$9.8 & -0.4$\pm$0.3 & 12.5$\pm$6.2 & 57.0$\pm$9.6 & 12.2$\pm$4.7 & 40.1$\pm$47.9 & 40.8$\pm$20.4 & 27.1$\pm$9.6 & 32.6 & 76.8$\pm$10.0 & \textbf{78.7$\pm$0.7} & \textbf{87.1$\pm$2.3}\\
			walker2d-full-replay & 68.8$\pm$17.7 & 27.9$\pm$47.3 & -0.2$\pm$0.3 & 71.0$\pm$21.8 & 79.6$\pm$15.6 & 96.9$\pm$2.2 & 84.8$\pm$13.1 & 60.7$\pm$15.6 & - & 94.2$\pm$1.9 & \textbf{94.6$\pm$0.5} & \textbf{99.8$\pm$0.7}\\
			\midrule
			Average & 49.9 & 16.2 & 6.2 & 64.2 & 52.4 & 54.0 & 65.1 & 50.8 & - & 73.7 & \textbf{84.5} & \textbf{85.2} \\
			\bottomrule
		\end{tabular}
	\end{adjustbox}
\end{table}

The results show our methods outperform all the baseline methods on most of the datasets considered. Also, we reiterate that while the performance of EDAC is marginally better than SAC-$N$, EDAC achieves this result with a much smaller Q-ensemble size.

\section{CQL with N Q-networks}
\label{sec:cql_n}

Since other offline RL methods may also benefit from larger $N$ or ensemble diversification, here we evaluate CQL-$N$ and CQL with ensemble diversification for ablation. For CQL-$N$, we tried $N \in \{2, 5, 10, 50, 100\}$, where $N=2$ denotes the original version of CQL. For CQL with ensemble diversification, we added our diversification term to the CQL loss function and swept the coefficient $\eta$ in the range of $\{0.5, 1.0, 5.0\}$, which is the same range used in EDAC. The normalized return evaluation results on D4RL Gym $\ast$-medium datasets are shown in \Cref{tab:cql_n}.

\begin{table}[H]
	\centering
	\small
	\caption{Performance of CQL with $N$ Q-networks on D4RL Gym $\ast$-medium datasets.}
	\vspace{0.2em}
	\label{tab:cql_n}
	\begin{adjustbox}{max width=\columnwidth}
		\begin{tabular}{ll|rrr}
			\toprule
            & & halfcheetah-medium & hopper-medium & walker2-medium \\ \midrule
            \multirow{5}{*}{\textbf{CQL-$N$}} & $N=2$ & 46.9$\pm$0.4 & 61.9$\pm$6.4 & 79.5$\pm$3.2 \\
            & $N=5$ & 47.1$\pm$0.3 & 61.6$\pm$6.0 & 80.8$\pm$4.9 \\
            & $N=10$ & 45.9$\pm$0.3 & 60.1$\pm$4.8 & 70.9$\pm$0.9 \\
            & $N=50$ & 44.2$\pm$0.4 & 54.3$\pm$2.0 & 69.4$\pm$0.0 \\
            & $N=100$ & 43.7$\pm$0.2 & 43.7$\pm$0.8 & 71.3$\pm$3.8 \\ \midrule
            \multirow{3}{*}{\textbf{CQL w/ diversification}} & $\eta=0.5$ & 46.5$\pm$0.4 & 65.8$\pm$11.2 & 82.2$\pm$0.6 \\
            & $\eta=1.0$ & 47.2$\pm$0.1 & 69.2$\pm$8.8 & 80.5$\pm$3.1 \\
            & $\eta=5.0$ & 47.4$\pm$0.5 & 60.9$\pm$3.2 & 82.1$\pm$1.2 \\ \midrule
            \textbf{SAC-$N$} & & \textbf{67.5$\pm$1.2} & \textbf{100.3$\pm$0.3} & \textbf{87.9$\pm$0.2} \\
            \textbf{EDAC} & & \textbf{65.9$\pm$1.6} & \textbf{101.6$\pm$0.6} & \textbf{92.5$\pm$0.8} \\
            
			\bottomrule
		\end{tabular}
	\end{adjustbox}
\end{table}

We observe that even though increasing the number of Q-networks or applying gradient diversification do help CQL on some of the datasets, the improved performance still falls far behind our methods (SAC-$N$, EDAC).

\section{Comparison to variance regularization}
\label{sec:max_q_var}
In this section, we compare EDAC with increasing the variance of the Q-estimates for in-distribution actions, which is another possible option for ensemble diversification. \Cref{tab:q_var} shows the average return and the Q-value estimation statistics on the walker2d-expert dataset when using the Q-estimate variance regularizer, compared to EDAC. \textit{Var reg} adds to SAC-$N$ a regularizing term that explicitly increases the variance of the Q-estimates, weighted by a coefficient $c$. \textit{Q Avg} denotes the estimated Q-values of each model in evaluation. \textit{Q Std} means the standard deviation of Q-estimates from a ensemble on the given actions. \textit{Q Std gap} means the gap of standard deviations from behavior and random actions. 

\begin{table}[H]
	\centering
	\small
    \caption{Comparison of EDAC with Q-estimate variance regularization on walker2d-expert dataset. Same number of Q-networks is used for all methods.}
	\vspace{0.2em}
	\label{tab:q_var}
	\begin{adjustbox}{max width=\columnwidth}
		\begin{tabular}{ll|rrrrr}
			\toprule
            & & Return & Q Avg & Q Std & Q Std & Q Std gap \\ 
            & & & & (behavior action) & (random action) & \\ \midrule
            \multirow{3}{*}{\textbf{Var reg}} & $c=50$ & 511 & overflow & N/A & N/A & N/A\\
            & $c=100$ & 20 & -95 & 5.3 & 7 & 1.7 \\
            & $c=200$ & 368	& -929 & 10.6 & 15.1 & 4.5 \\ \midrule
            \textbf{EDAC} & & 5236 & 392	& 1.2 & 10.5 & 9.3 \\
			\bottomrule
		\end{tabular}
	\end{adjustbox}
\end{table}

On the walker2d-expert dataset, adding the variance-enhancing regularizer either leads to two results: (1) Exploding Q-values when the regularization is not strong ($c=50$) or (2) severe Q-value underestimation when the regularization is stronger ($c=100, 200$). The reason behind these two extreme modes is that the gap of the Q-estimate variance between behavior actions and OOD actions, which is crucial for conservative learning, increases much slower than the absolute increase of the Q-estimate variances. For example, on $c=200$, the Q-estimate Std gap is 4.5. This gap is about half of EDAC, whereas the absolute Q-estimate Stds on both actions are much higher. In EDAC, the variance of Q-estimates on behavior actions remains small even though the OOD actions are sufficiently penalized, as we only diversify the Q-networks’ gradients instead of the Q-values themselves.

\section{Hyperparameter sensitivity}
\label{sec:sensitivity}

To measure the hyperparameter sensitivity of EDAC, we sweep the weight of the gradient diversification term $\eta$ in the range of \{0.0, 0.5, 1.0, 2.0, 5.0\} on the hopper datasets, fixing the number of Q-networks to $N=50$, and present the results in \Cref{tab:hyper_sensitivity}.

\begin{table}[H]
	\centering
	\small
    \caption{Performance of EDAC over various $\eta$ on the D4RL Gym hopper datasets.}
	\vspace{0.2em}
	\label{tab:hyper_sensitivity}
	\begin{adjustbox}{max width=\columnwidth}
		\begin{tabular}{l|rrrrr}
			\toprule
            Dataset type & $\eta=0.0$ & $\eta=0.5$ & $\eta=1.0$  & $\eta=2.0$ & $\eta=5.0$ \\ \midrule
            random & 25.3$\pm$10.4 & 9.3$\pm$6.2 & 6.7$\pm$0.8 & 3.8$\pm$1.5 & 1.9$\pm$0.8\\
            medium & 7.3$\pm$0.1 & 102.2$\pm$0,4 & 101.6$\pm$0.5 & 94.5$\pm$12.4 & 75.5$\pm$24.1 \\
            expert & 2.3$\pm$0.1 & 110.3$\pm$0.2 & 110.1$\pm$0.1 & 109.8$\pm$0.2 & 109.9$\pm$0.2 \\
            medium-expert & 46.9$\pm$33.0 & 103.8$\pm$12.4 & 110.7$\pm$0.1 & 109.8$\pm$0.2 & 109.8$\pm$0.2 \\
            medium-replay & 100.9$\pm$0.4 & 100.3$\pm$0.8 & 101.0$\pm$0.4 & 100.2$\pm$0.5 & 20.6$\pm$0.7 \\
            full-replay & 105.6$\pm$0.4 & 104.9$\pm$0.5 & 4105.4$\pm$0.6 & 104.0$\pm$0.2 & 106.3$\pm$0.9 \\
			\bottomrule
		\end{tabular}
	\end{adjustbox}
\end{table}

The results show that except for the random dataset, there exists a large well of hyperparameters where EDAC achieves expert-level performance. We also observe that increasing $\eta$ sometimes degrades the performance on random, medium, and medium-replay datasets which contain trajectories drawn from suboptimal policies. Intuitively, the gradient diversification term induces the learned policy to favor in-distribution actions over OOD actions. Therefore, increasing $\eta$ can lead to a more conservative policy, which is undesirable if the behavior policy is suboptimal.

\end{document}